\def\eqref#1{equation~\ref{#1}}
\def\1{\bm{1}}
\DeclareMathAlphabet{\mathsfit}{\encodingdefault}{\sfdefault}{m}{sl}
\SetMathAlphabet{\mathsfit}{bold}{\encodingdefault}{\sfdefault}{bx}{n}
\def\x{\bm x}
\def\w{\bm w}
\newtheorem{theorem}{Theorem}
\def\x{\mathbf x}
\def\w{\mathbf w}
\def\v{\mathbf v}
\def\bSigma{\mathbf \Sigma}
\def\A{\mathbf A}
\def\D{\mathcal D}
\def\u{\mathbf u}
\def\I{\mathbf I}
\def\G{\mathbf G}
\def\C{\mathbf C}
\def\g{\mathbf g}
\title{Learning Curves for Stochastic Gradient Descent on Structured Features}
\author{Blake Bordelon \& Cengiz Pehlevan
\\
John A. Paulson School of Engineering and Applied Sciences\\
Center for Brain Science\\
Harvard University\\
Cambridge, MA 02138, USA \\
\texttt{\{blake\_bordelon,cpehlevan\}@g.harvard.edu} \\
}
\begin{document}

\maketitle

\begin{abstract}
 The generalization performance of a machine learning algorithm such as a neural network depends in an intricate way on the structure of the data distribution. To analyze the influence of data structure on test loss dynamics, we study an exactly solveable model of stochastic gradient descent (SGD) which predicts test loss when training on features with arbitrary covariance structure. We solve the theory exactly for both Gaussian features and arbitrary features and we show that the simpler Gaussian model accurately predicts test loss of nonlinear random-feature models and deep neural networks trained with SGD on real datasets such as MNIST and CIFAR-10. We show that the optimal batch size at a fixed compute budget is typically small and depends on the feature correlation structure, demonstrating the computational benefits of SGD with small batch sizes. Lastly, we extend our theory to the more usual setting of stochastic gradient descent on a fixed subsampled training set, showing that both training and test error can be accurately predicted in our framework on real data.
\end{abstract}

\section{Introduction}


Understanding the dynamics of SGD on realistic learning problems is fundamental to learning theory. Due to the challenge of modeling the structure of realistic data, theoretical studies of generalization often attempt to derive data-agnostic generalization bounds or study the typical performance of the algorithm on high-dimensional, factorized data distributions \citep{engel_van_den_broeck_2001}. 
Realistic datasets, however, often lie on low dimensional structures embedded in high dimensional ambient spaces \citep{pope2021the}. For example, MNIST and CIFAR-10 lie on surfaces with intrinsic dimension of $\sim 14$ and $\sim 35$ respectively \citep{Spigler_2020}. To understand the average-case performance of SGD in more realistic learning problems and its dependence on data, model and hyperparameters, incorporating structural information about the learning problem is necessary.

In this paper, we calculate the average case performance of SGD on models of the form $f(\x) = \w \cdot \bm \psi(\x)$ for nonlinear feature map $\bm\psi$ trained with MSE loss. We express test loss dynamics in terms of the induced second and fourth moments of $\bm\psi$. Under a regularity condition on the fourth moments, we show that the test error can be accurately predicted in terms of second moments alone. We demonstrate the accuracy of our theory on random feature models and wide neural networks trained on MNIST and CIFAR-10 and accurately predict test loss scalings on these datasets. We explore in detail the effect of minibatch size, $m$, on learning dynamics. By varying $m$, we can interpolate our theory between single sample SGD ($m=1$) and gradient descent on the population loss ($m\to\infty$). To explore the computational advantages SGD compared to standard full batch gradient descent we analyze the loss achieved at a fixed compute budget $C = t m$ for different minibatch size $m$ and number of steps $t$, trading off the number of parameter update steps for denoising through averaging. We show that generally, the optimal batch size is small, with the precise optimum dependent on the learning rate and structure of the features. Overall, our theory shows how learning rate, minibatch size and data structure interact with the structure of the learning problem to determine generalization dynamics. It provides a predictive account of training dynamics in wide neural networks.

\subsection{Our Contributions}
The novel contributions of this work are described below.
\begin{itemize}
    \item We calculate the exact expected test error for SGD on MSE loss for arbitrary feature structure in terms of second and fourth moments as we discuss in Section \ref{th_general_fourth}. We show how structured gradient noise induced by sampling alters the loss curve compared to vanilla GD.
    \item For Gaussian features (or those with regular fourth moments), we compute the test error in Section \ref{th_gauss}. This theory is shown to be accurate in experiments with random feature models and wide networks in the kernel regime trained on MNIST and CIFAR-10. 
    \item We show that for fixed compute/sample budgets and structured features with power law spectral decays, optimal batch sizes are small. We study how optimal batch size depends on the structure of the feature correlations and learning rate.
    \item We extend our exact theory to study multi-pass SGD on a fixed finite training set. Both test and training error can be accurately predicted for random feature models on MNIST.
\end{itemize}

\section{Related Work}


The analysis of stochastic gradient descent has a long history dating back to seminal works of \citet{Polyak1992AccelerationOS} and \citet{Ruppert}, who analyzed time-averaged iterates in a noisy problem. Many more works have examined a similar setting, identifying how averaged and accelerated versions of SGD perform asymptotically when the target function is noisy (not a deterministic function of the input) \citep{flammarion2015averaging, flammarion_noisy_1_over_n, shapiro_stoch,robbins_monro, chung, duchi_ruan, yu2020analysis, anastasiu, gurbuzbalaban2020heavy}. 

Recent studies have also analyzed the asymptotics of noise-free MSE problems with arbitrary feature structure to see what stochasticity arises from sampling. Prior works have found exponential loss curves problems as an upper bound \citep{jain2018accelerating} or as typical case behavior for SGD on unstructured data \citep{Werfel}. A series of more recent works have considered the over-parameterized (possibly infinite dimension) setting for SGD, deriving power law test loss curves emerge with exponents which are better than the $O(t^{-1})$ rates which arise in the noisy problem \citep{berthier2020tight, pillaudvivien2018statistical, dieuleveut, varre2021iterate, dieuleveut_nonparametric, ying_nonparametric, fischer_sobolev, zou2021benign}.  These works provide bounds of the form $O(t^{-\beta})$ for exponents $\beta$ which depend on the task and feature distribution.

Several works have analyzed average case online learning in shallow and two-layer neural networks. Classical works often analyzed unstructured data \citep{heskes,Biehl_1994,Mace1998StatisticalMA, saad_solla,lecun_solla_eigenspectrum_cov,goldt_first}, but recently the hidden manifold model enabled characterization of learning dynamics in continuous time when trained on structured data, providing an equivalence with a Gaussian covariates model \citep{goldt_hidden_manifold, goldt2021gaussian}. In the continuous time limit considered in these works, SGD converges to gradient flow on the population loss, where fluctuations due to sampling disappear and order parameters obey deterministic dynamics. Other recent works, however, have provided dynamical mean field frameworks which allow for fluctuations due to random sampling of data during a continuous time limit of SGD, though only on simple generative data models \citep{mignacco2020dynamical, mignacco2021stochasticity}. 

Studies of fully trained linear (in trainable parameters) models also reveal striking dependence on data and feature structure. Analysis for models trained on MSE \citep{bartlett2020benign, tsigler2020benign, bordelon2020spectrum, Canatar2020SpectralBA}, hinge loss \citep{chatterji2021finite, cao2021risk, cao2019generalization} and general convex loss functions \citep{loureiro2021capturing} have now been performed, demonstrating the importance of data structure for offline generalization.


Other works have studied the computational advantages of SGD at different batch sizes $m$. \citet{ma_batch_size} study the tradeoff between taking many steps of SGD at small $m$ and taking a small number of steps at large $m$. After a critical $m$, they observe a saturation effect where increasing $m$ provides diminishing returns. \citet{zhang2019algorithmic} explore how this critical batch size depends on SGD and momentum hyperparameters in a noisy quadratic model. Since they stipulate constant gradient noise induced by sampling, their analysis results in steady state error rather than convergence at late times, which may not reflect the true noise structure induced by sampling.   

\section{Problem Definition and Setup}

We study stochastic gradient descent on a linear model with parameters $\w$ and feature map $\bm\psi(\x) \in \mathbb{R}^N$ (with $N$ possibly infinite). Some interesting examples of linear models are random feature models, where $\bm\psi(\x) = \phi(\bm G \x)$ for random matrix $\bm G$ and point-wise nonlinearity $\phi$ \citep{Rahimi_Recht, mei2020generalization}. Another interesting linearized setting is wide neural networks with neural tangent kernel (NTK) parameterization \citep{jacot2020kernel, Lee_2020}. Here the features are parameter gradients of the neural network function $\bm\psi(\x) = \nabla_{\bm\theta} f(\x,\bm\theta)|_{\bm\theta_0}$ at initialization. We will study both of these special cases in experiments. 

We optimize the set of parameters $\w$ by SGD to minimize a population loss of the form 
\begin{equation}
    L(\w) = \left< \left( \w \cdot \bm\psi(\x) - y(\x) \right)^2 \right>_{\x \sim p(\x)},
\end{equation}
where $\x$ are input data vectors associated with a probability distribution $p(\x)$, $\bm\psi(\x)$ is a nonlinear feature map and $y(\x)$ is a target function which we can evaluate on training samples. We assume that the target function is square integrable $\left< y(\x)^2 \right>_{\x} < \infty$ over $p(\x)$. Our aim is to elucidate how this population loss evolves during stochastic gradient descent on $\w$.  We derive a formula in terms of the eigendecomposition of the feature correlation matrix and the target function
\begin{equation}
    \bSigma = \left< \bm\psi(\x) \bm\psi(\x)^\top \right>_{\x} =  \sum_{k=1}^N \lambda_k \u_k \u_k^\top \ , \quad y(\x) = \sum_{k} v_k \u_k^\top \bm\psi(\x) + y_{\bot}(\x),
\end{equation}
where $\left< y_\perp(\x) \bm\psi(\x) \right> = 0$. We justify this decomposition of $y(\x)$ in the Appendix \ref{target_decompose} using an eigendecomposition and show that it is general for target functions and features with finite variance.

During learning, parameters $\w$ are updated to estimate a target function $y$ which, as discussed above, can generally be expressed as a linear combination of features $y = \w^* \cdot \bm\psi + y_{\perp}$. 
At each time step $t$, the weights are updated by taking a stochastic gradient step on a fresh mini-batch of $m$ examples 
\begin{equation}
    \w_{t+1} = \w_t - \frac{\eta}{m} \sum_{\mu = 1}^m \bm\psi_{t,\mu} \left( \w_t \cdot \bm\psi_{t,\mu} - y_{t,\mu} \right)
,\end{equation}
where each of the vectors $\bm\psi_{t,\mu}$ are sampled independently and $y_{t,\mu} = \w^* \cdot \bm\psi_{t,\mu}$. The learning rate $\eta$ controls the gradient descent step size while the batch size $m$ gives a empirical estimate of the gradient at timestep $t$. At each timestep, the test-loss, or generalization error, has the form
\begin{equation}
    L_t = \left< \left( \w_t \cdot \bm\psi(\x) - \w^* \cdot \bm\psi(\x) - y_{\perp}(\x) \right)^2 \right>_{\bm \x} = (\w_t-\w^*)^\top \bSigma (\w_t-\w^*) + \left< y_{\perp}(\x)^2 \right>,
\end{equation}
which quantifies exactly the test error of the vector $\w_t$. Note, however, that $L_t$ is a random variable since $\w_t$ depends on the precise history of sampled feature vectors $\mathcal D_t = \{ \bm\psi_{t,\mu} \}$. Our theory, which generalizes the recursive method of \citep{Werfel} allows us to compute the \textit{expected} test loss by averaging over all possible sequences to obtain $\left< L_t\right>_{\mathcal D_t}$. Our calculated learning curves are not limited to the one-pass setting, but rather can accommodate sampling minibatches from a finite training set with replacement and testing on a separate test set which we address in Section \ref{sec:test_train_split}.

In summary, we will develop a theory that predicts the expected test loss $\left< L_t \right>_{\mathcal D_t}$ averaged over training sample sequences $\mathcal D_t$ in terms of the quantities $\{ \lambda_k , v_k, \left< y_{\perp}(\x)^2 \right>_\x \}$. This will reveal how the structure in the data and the learning problem influence test error dynamics during SGD. This theory is a quite general analysis of linear models on square loss, analyzing the performance of linearized models on arbitrary data distributions, feature maps $\bm\psi$, and target functions $y(\x)$. 

\section{Analytic Formulae for Learning Curves}

\subsection{Learnable and Noise Free Problems}\label{sec:learnable_noise_free}

Before studying the general case, we first analyze the setting where the target function is \textit{learnable}, meaning that there exist weights $\w^*$ such that $y(\x) = \w^* \cdot \bm\psi(\x)$. For many cases of interest, this is a reasonable assumption, especially when applying our theory to real datasets by fitting an atomic measure on $P$ points $\frac{1}{P} \sum_{\mu}\delta(\x-\x^\mu)$. We will further assume that the induced feature distribution is Gaussian so that all moments of $\bm\psi$ can be written in terms of the covariance $\bSigma$. We will remove these assumptions in later sections. 

\begin{theorem}\label{th_gauss}
Suppose the features $\bm\psi$ follow a Gaussian distribution $\bm\psi \sim \mathcal N(0, \bSigma)$ and the target function is learnable in these features $y = \w^* \cdot \bm\psi$. After $t$ steps of SGD with minibatch size $m$ and learning rate $\eta$, the expected (over possible sample sequences $\D_t$) test loss $\left< L_t \right>_{\mathcal D_t}$ has the form
\begin{equation}
    \left< L_t \right>_{\D_t} = \bm\lambda^\top \mathbf A^t \mathbf{v}^2 \ , \ \A =  \left( \I - \eta \ \text{diag}(\bm\lambda) \right)^2 + \frac{\eta^2}{m} \ \text{diag}\left( \bm\lambda^2 \right) + \frac{\eta^2}{m} \bm\lambda \bm\lambda^\top
\end{equation}
where $\bm\lambda$ is a vector containing the eigenvalues of $\bSigma$ and $\mathbf{v}^2$ is a vector containing elements $(\mathbf{v}^2)_k = v_k^2 = ( \u_k \cdot \w^*)^2$ for eigenvectors $\u_k$ of $\bSigma$. The function $\text{diag}(\cdot)$ constructs a diagonal matrix with the argument vector placed along the diagonal.  
\end{theorem}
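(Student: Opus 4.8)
The plan is to track the expected second-moment matrix of the weight error through the SGD recursion, reduce it to a closed linear recursion on a single vector of modal errors by passing to the eigenbasis of $\bSigma$, and then read off $\langle L_t\rangle$ as a linear functional of that vector.

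First I would change variables to the residual $\bm\delta_t := \w_t - \w^*$. Since $y_{t,\mu} = \w^*\cdot\bm\psi_{t,\mu}$, the update becomes $\bm\delta_{t+1} = (\I - \eta\,\hat\bSigma_t)\,\bm\delta_t$ with $\hat\bSigma_t := \tfrac1m\sum_{\mu=1}^m \bm\psi_{t,\mu}\bm\psi_{t,\mu}^\top$, and because the problem is learnable and noise free, $L_t = \bm\delta_t^\top\bSigma\bm\delta_t = \Tr(\bSigma\,\bm\delta_t\bm\delta_t^\top)$. Setting $\C_t := \langle \bm\delta_t\bm\delta_t^\top\rangle_{\D_t}$ and using that the batch sampled at step $t$ is independent of $\bm\delta_t$ (so one may condition on $\bm\delta_t$, then average over the fresh batch), together with $\langle\hat\bSigma_t\rangle = \bSigma$, gives
\[
\C_{t+1} = \C_t - \eta\,\bSigma\C_t - \eta\,\C_t\bSigma + \eta^2\big\langle \hat\bSigma_t\,\C_t\,\hat\bSigma_t\big\rangle .
\]

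The crux is the fourth-moment term. I would split the double sum over sample indices in $\hat\bSigma_t\C_t\hat\bSigma_t$ into the $m(m-1)$ off-diagonal pairs, which factorize by independence into $\bSigma\C_t\bSigma$, and the $m$ diagonal terms, to which I apply Wick's theorem (Isserlis) for a zero-mean Gaussian: $\langle \bm\psi\bm\psi^\top\mathbf M\bm\psi\bm\psi^\top\rangle = 2\,\bSigma\mathbf M\bSigma + \bSigma\,\Tr(\bSigma\mathbf M)$ for symmetric $\mathbf M$. Collecting the combinatorial weights $\tfrac1{m^2}\big[m(m-1) + 2m\big] = \tfrac{m+1}{m}$ and $\tfrac1{m}$ yields $\langle\hat\bSigma_t\C_t\hat\bSigma_t\rangle = \bSigma\C_t\bSigma + \tfrac1m\big(\bSigma\C_t\bSigma + \bSigma\,\Tr(\bSigma\C_t)\big)$, hence
\[
\C_{t+1} = (\I-\eta\bSigma)\,\C_t\,(\I-\eta\bSigma) + \frac{\eta^2}{m}\bSigma\C_t\bSigma + \frac{\eta^2}{m}\bSigma\,\Tr(\bSigma\C_t).
\]
Passing to the eigenbasis of $\bSigma$ and writing $c_k^{(t)}$ for the $k$-th diagonal entry of $\C_t$ in that basis, the matrices $\I-\eta\bSigma$ and $\bSigma$ are diagonal and the trace couples only through diagonal entries, so the diagonal decouples and obeys $c_k^{(t+1)} = \big[(1-\eta\lambda_k)^2 + \tfrac{\eta^2}{m}\lambda_k^2\big]c_k^{(t)} + \tfrac{\eta^2}{m}\lambda_k\sum_j\lambda_j c_j^{(t)}$, i.e. $\mathbf c^{(t+1)} = \A\,\mathbf c^{(t)}$ with $\A$ exactly as stated. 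Iterating and using $L_t = \Tr(\bSigma\C_t) = \sum_k\lambda_k c_k^{(t)} = \bm\lambda^\top\mathbf c^{(t)}$ gives $\langle L_t\rangle = \bm\lambda^\top\A^t\mathbf c^{(0)}$; with the initialization $\w_0 = 0$ we have $\bm\delta_0 = -\w^*$, so $c_k^{(0)} = (\u_k\cdot\w^*)^2 = v_k^2$, i.e. $\mathbf c^{(0)} = \mathbf v^2$, which finishes the argument.

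The step I expect to be the main obstacle is the bookkeeping in the fourth-moment reduction: getting the counts of diagonal versus off-diagonal sample pairs right, applying Isserlis' theorem correctly (three pairings, two of which collapse to $\bSigma\C_t\bSigma$ under symmetry of $\C_t$ and one to the trace term), and then checking that the resulting matrix recursion genuinely closes on the diagonal in the $\bSigma$ eigenbasis — that no off-diagonal coupling leaks back in. Everything else is routine linear algebra.
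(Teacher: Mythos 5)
Your proposal is correct and follows essentially the same route as the paper's own derivation: tracking $\C_t = \langle(\w_t-\w^*)(\w_t-\w^*)^\top\rangle$, splitting the fourth-moment term into off-diagonal pairs plus $m$ Isserlis-evaluated diagonal terms to get $\frac{m+1}{m}\bSigma\C_t\bSigma + \frac{1}{m}\bSigma\Tr(\bSigma\C_t)$, and then closing the recursion on the diagonal entries $c_k = \u_k^\top\C_t\u_k$ in the eigenbasis of $\bSigma$, which gives $\langle L_t\rangle = \bm\lambda^\top\A^t\mathbf{v}^2$. The only cosmetic difference is that you make the initialization $\w_0=0$ (hence $\mathbf{c}^{(0)}=\mathbf{v}^2$) explicit, which the paper leaves implicit.
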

\begin{proof}
See Appendix \ref{gauss_derivation} for the full derivation. We will provide a brief sketch of the proof here. The strategy of the proof relies on the fact that $\left< L_t \right> = \text{Tr} \ \bm\Sigma \ \C_t$ where $\C_t = \left< \left( \w_t - \w^* \right)(\w_t-\w^*)^\top \right>_{\mathcal D_t}$. We derive the following recursion relation for this error matrix
\begin{align}
    \C_{t+1} = (\I - \eta \bm\Sigma) \C_t (\I -\eta \bm\Sigma) + \frac{\eta^2}{m}\left[ \bm\Sigma \C_t \bm\Sigma + \bm\Sigma \text{Tr}\left( \bm\Sigma  \C_t \right) \right]
\end{align}
The loss only depends on $c_{k,t} = \u_k^\top \C_t \u_k$. Solving the recurrence, $\bm c_{t} = \A^t \mathbf{v}^2$ and using $\left< L_t \right> = \sum_k \lambda_k \u_k^\top \C_t \u_k = \sum_k c_{k,t} \lambda_k = \bm\lambda^\top \A^t \mathbf{v}^2$, we obtain the desired result.
\end{proof}

Below we provide some immediate interpretations of this result.
\begin{itemize}[leftmargin=*]
    \item The matrix $\A$ contains two components; a matrix $\left( \I - \eta \ \text{diag}(\bm\lambda) \right)^2$ which represents the time-evolution of the loss under \textit{average gradient updates}. The remaining matrix $\frac{\eta^2}{m} \left( \text{diag}(\bm\lambda^2) + \bm\lambda \bm\lambda^\top \right)$ arises due to fluctuations in the gradients, a consequence of the stochastic sampling process. 
    \item The test loss obtained when training directly on the population loss can be obtained by taking the minibatch size $m \to \infty$. In this case, $\A \to (\I - \eta \  \text{diag}(\bm\lambda))^2$ and one obtains the population loss $L_t^{pop} = \sum_k v_k^2 \lambda_k (1-\eta\lambda_k)^{2t}$. This population loss can also be obtained by considering small learning rates, i.e. the $\eta \to 0$ limit, where $\A = (\I - \eta \ \text{diag}(\bm\lambda))^2 + O(\eta^2)$. 
    \item For general $\bm\lambda$ and $\eta^2 / m > 0$, $\A$ is non-diagonal, indicating that the components $\{\u_1,...,\u_k\}$ are not learned independently as $t$ increases like for $L^{pop}_t$, but rather interact during learning due to non-trivial coupling across eigenmodes at large $\eta^2/m$. This is unlike offline theory for learning in feature spaces (kernel regression), \citep{bordelon2020spectrum, Canatar2020SpectralBA}, this observation of mixing across covariance eigenspaces agrees with a recent analysis of SGD, which introduced recursively defined ``mixing terms" that couple each mode's evolution \citep{varre2021iterate}. 
    \item Though increasing $m$ always improves generalization at fixed time $t$ (proof given in Appendix \ref{minibatch_deriv}), learning with a fixed compute budget (number of gradient evaluations) $C = t m$, can favor smaller batch sizes. We provide an example of this in the next sections and Figure \ref{fig:isotropic} (d)-(f).
    \item The lower bound $\left< L_t\right> \geq \bm\lambda^\top \bm v^2 \left[ (1-\eta)^2 + \frac{\eta^2}{m} |\bm\lambda|^2 \right]^t$ can be used to find necessary stability conditions on $m,\eta$. This bound implies that $\left<L_t\right>$ will diverge if $m < \frac{\eta }{2-\eta} |\bm\lambda|^2$. The learning rate must be sufficiently small and the batch size sufficiently large to guarantee convergence. This stability condition depends on the features through $|\bm\lambda|^2 = \sum_k \lambda_k^2$. One can derive heuristic optimal batch sizes and optimal learning rates through this lower bound. See Figure \ref{fig:optimal_batch_size} and Appendix \ref{stability_proof}.
\end{itemize}



\subsubsection{Special Case 1: Unstructured Isotropic Features}

This special case was previously analyzed by  \citet{Werfel} which takes $\bSigma = \I \in \mathbb{R}^{N \times N}$ and $m = 1$. We extend their result for arbitrary $m$, giving the following learning curve
\begin{equation}\label{eq:unstruct}
    \left< L_t \right>_{\D_t}
    = \left( \left( 1 -  \eta\right)^2 + \frac{1+N}{m}\eta^2  \right)^t ||\w^*||^2  \ , \quad \left< L_t^* \right>_{\mathcal D_t} = \left(1-\frac{m}{m+N+1} \right)^t ||\w^*||^2,
\end{equation}
where the second expression has optimal $\eta$. First, we note the strong dependence on the ambient dimension $N$: as $N \gg m$, learning happens at a rate $ \left< L_t \right> \sim e^{-t m/N}$. Increasing the minibatch size $m$ improves the exponential rate by reducing the gradient noise variance. Second, we note that this feature model has the same rate of convergence for every learnable target function $y$. At small $m$, the convergence at any learning rate $\eta$ is much slower than the convergence of the $m\to\infty$ limit, $L_{pop} = (1-\eta)^{2t} ||\w^*||^2$ which does not suffer from a dimensionality dependence due to gradient noise. Lastly, for a fixed compute budget $C=tm$, the optimal batch size is $m^* = 1$; see Figure \ref{fig:isotropic} (d). This can be shown by differentiating $\left< L_{C/m} \right>$ with respect to $m$ (see Appendix \ref{fixed_compute_minibatch}). 
In Figure \ref{fig:isotropic} (a) we show  theoretical and simulated learning curves for this model for varying values of $N$ at the optimal learning rate and in Figure \ref{fig:isotropic} (d), we show the loss as a function of minibatch size for a fixed compute budget $C = t m = 100$. While fixed $C$ represents fixed sample complexity, we stress that it may not represent wall-clock run time when data parallelism is available  \citep{shallue2018measuring}. 

\begin{figure}[t]
    \centering
    \subfigure[Isotropic Features]{\includegraphics[width=0.32\linewidth]{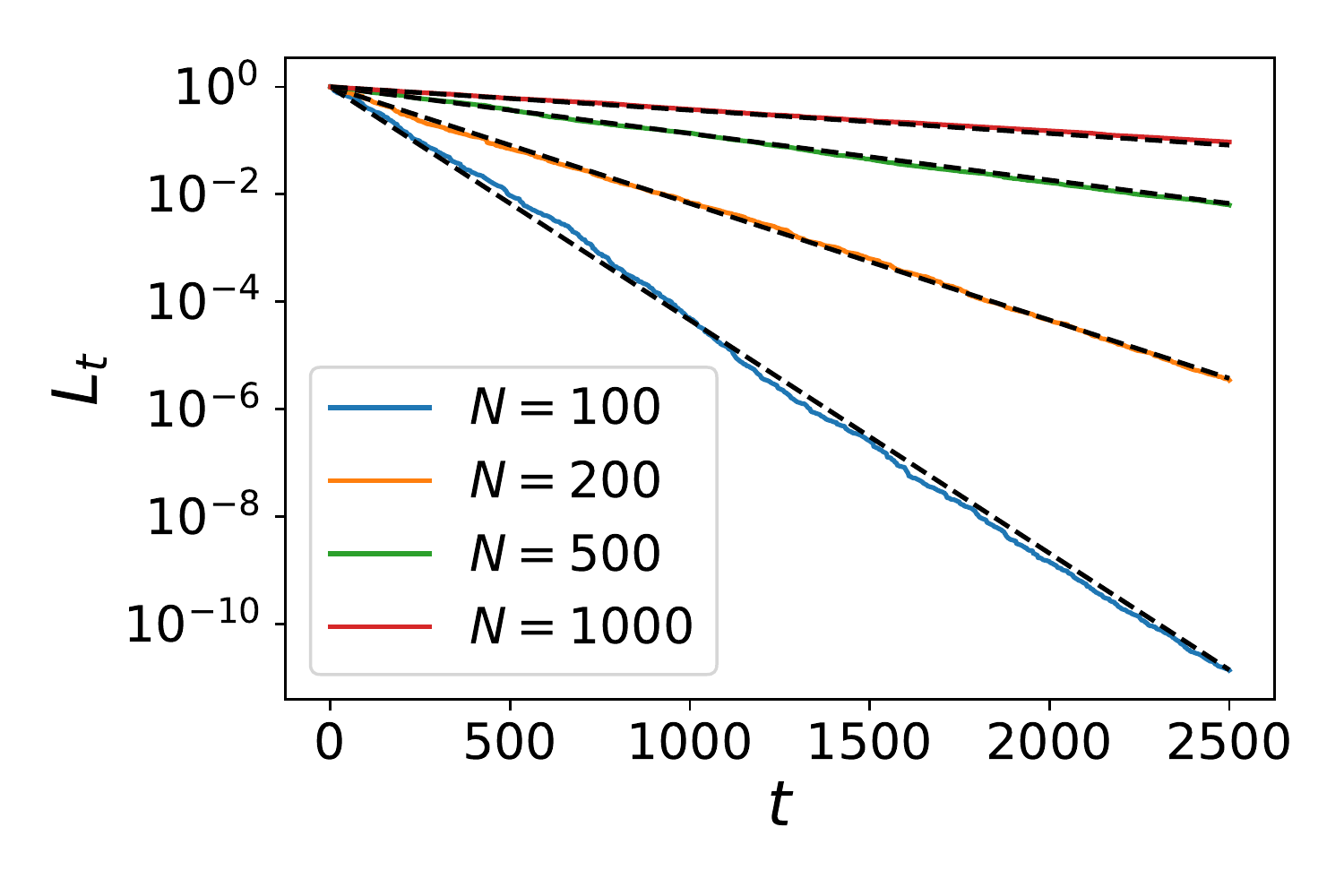}}
    \subfigure[Power Law Features]{\includegraphics[width=0.32\linewidth]{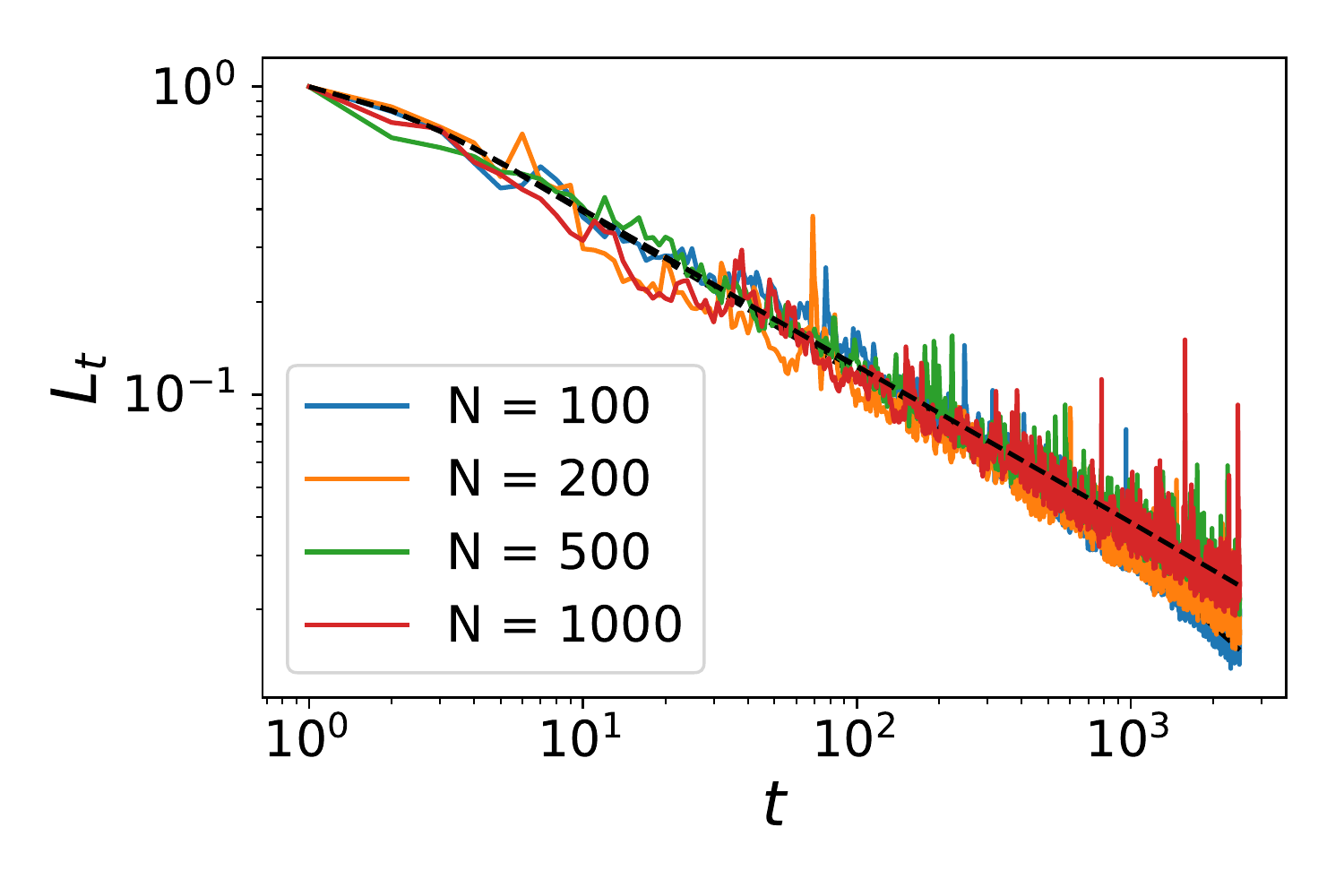}}
    \subfigure[MNIST Random ReLU Features]{\includegraphics[width=0.32\linewidth]{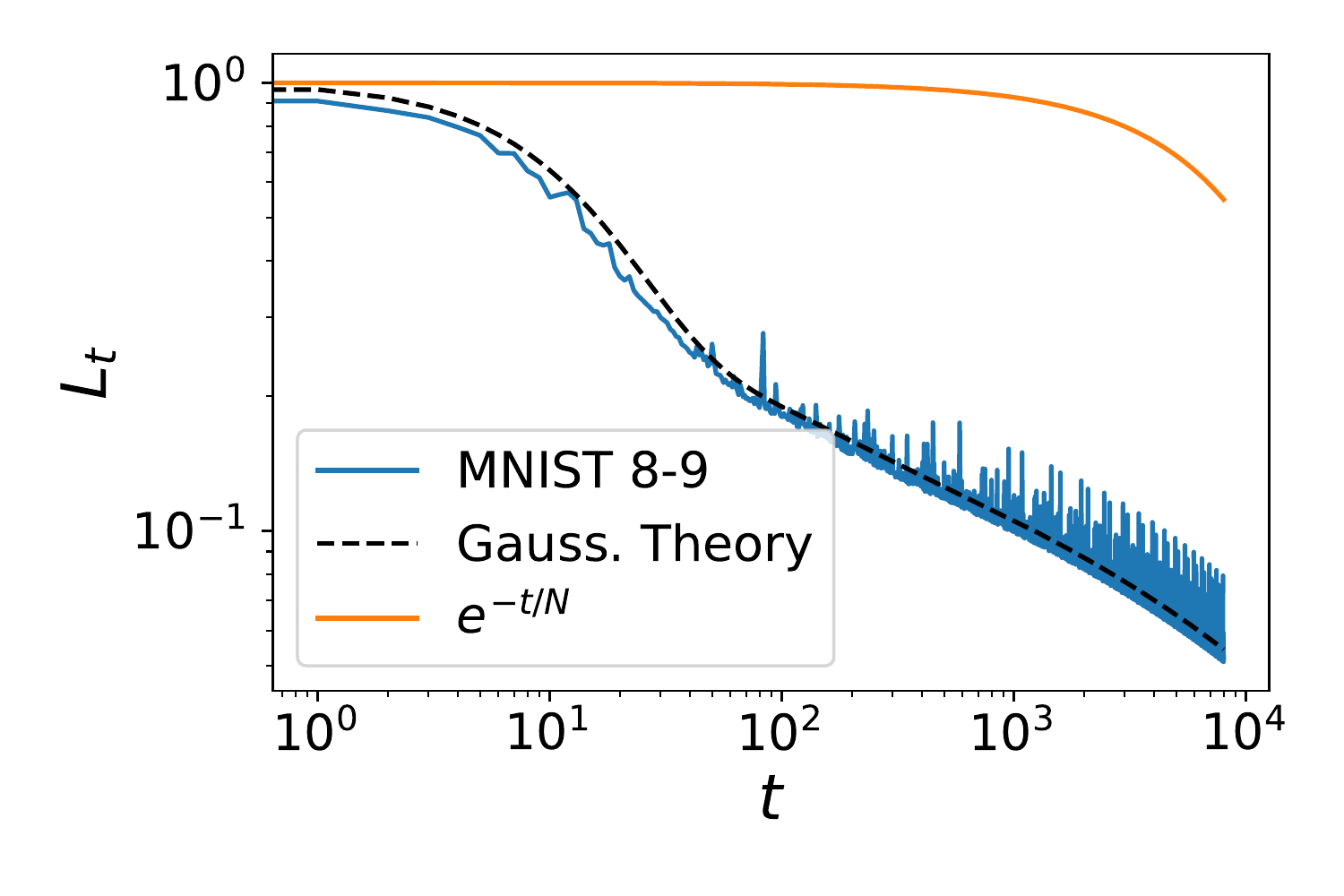}}
    \subfigure[Fixed Compute Isotropic ]{\includegraphics[width=0.32\linewidth]{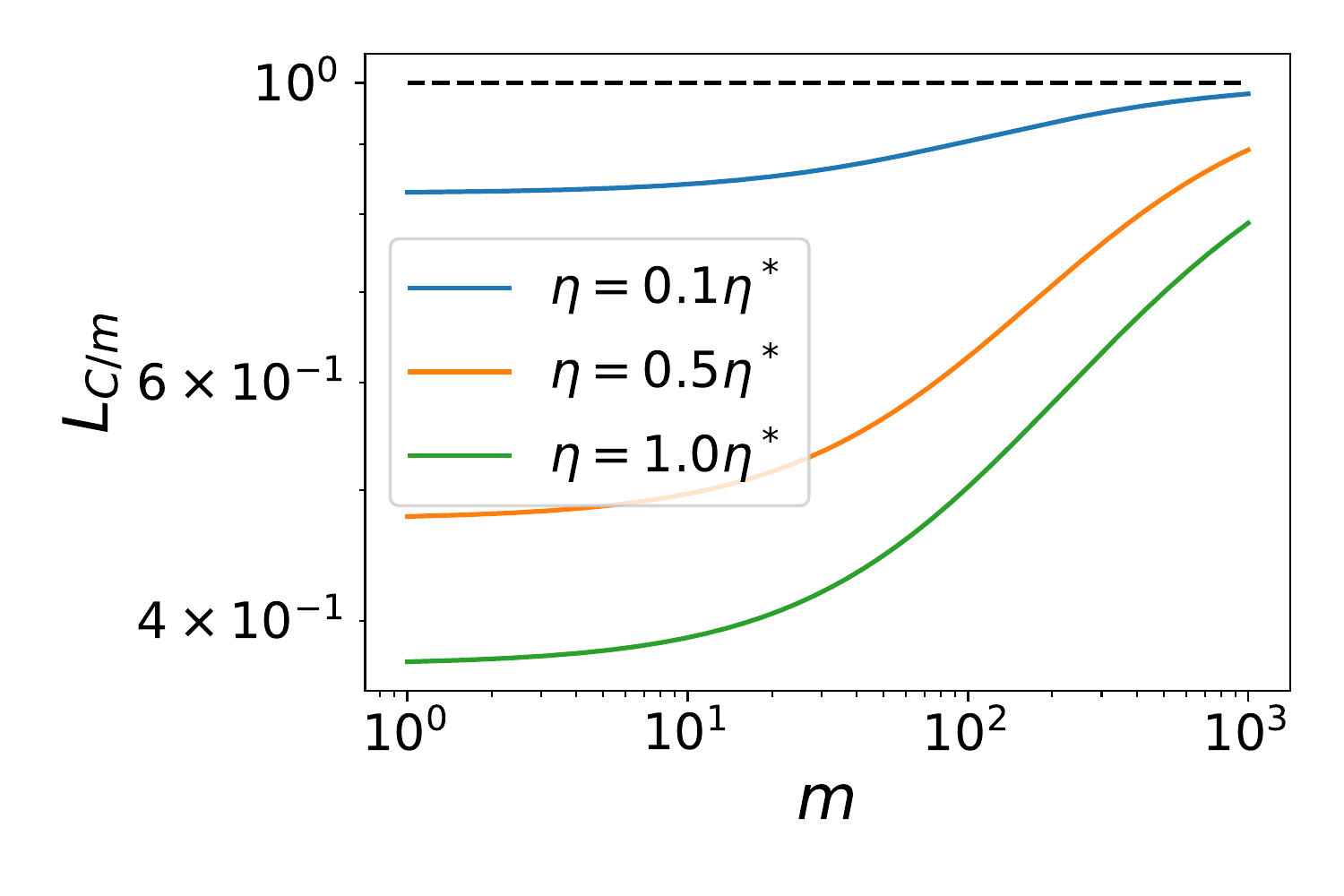}}
    \subfigure[Fixed Compute Power Law]{\includegraphics[width=0.32\linewidth]{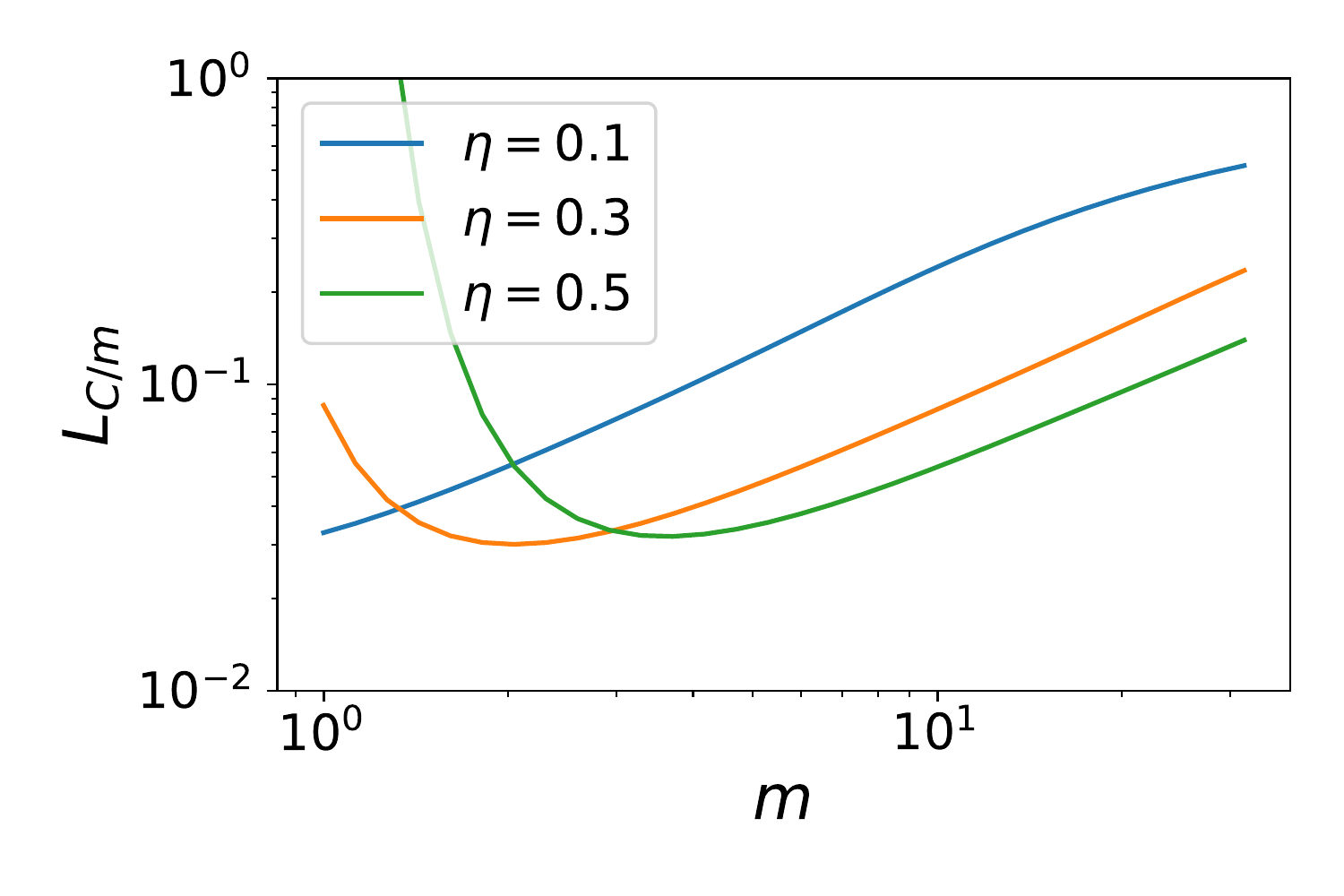}}
    \subfigure[Fixed Compute ReLU MNIST ]{\includegraphics[width=0.32\linewidth]{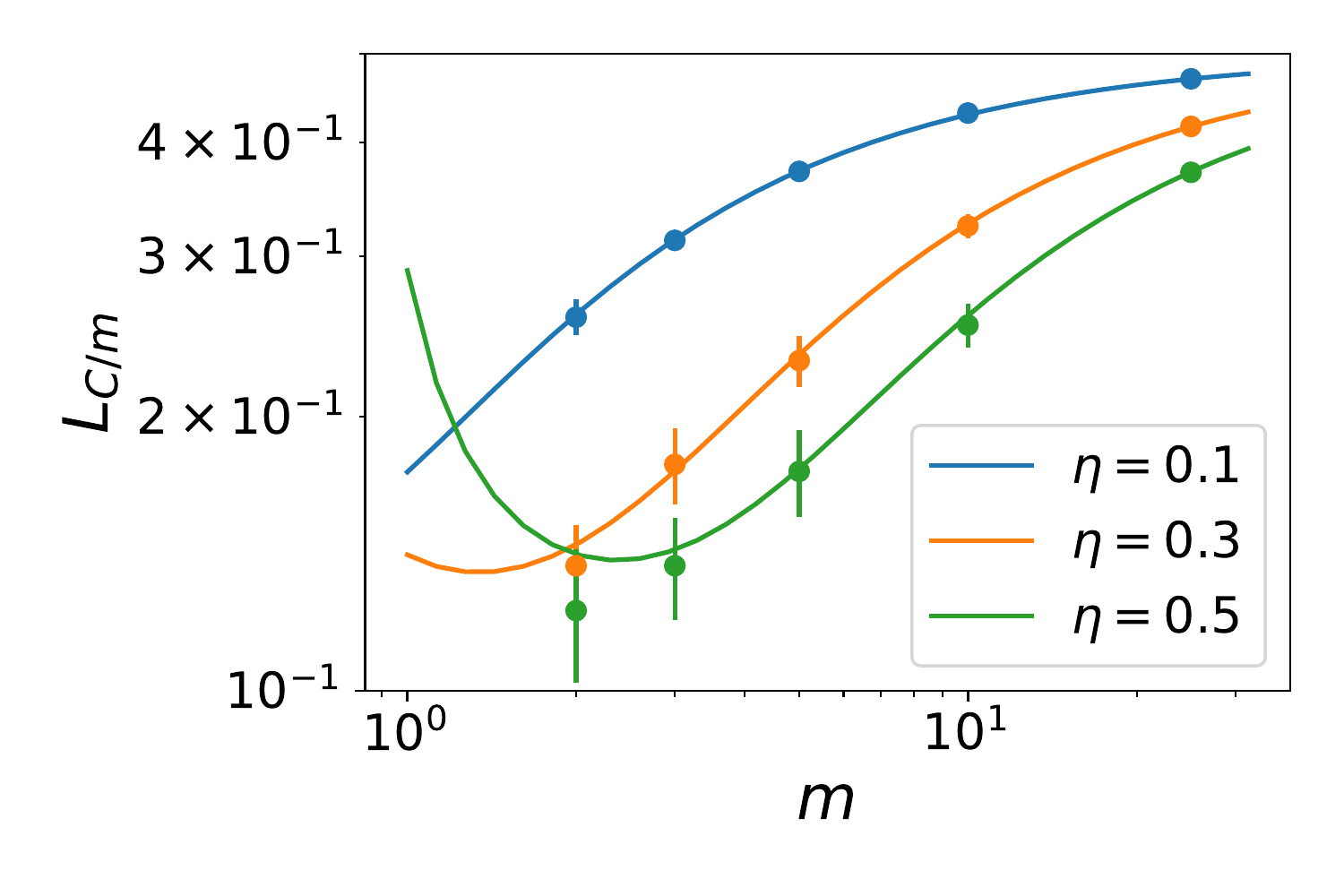}}
    \caption{ Isotropic features generated as $\bm\psi \sim \mathcal{N}(0,\I)$ have qualitatively different learning curves than power-law features observed in real data. Black dashed lines are theory. (a) Online learning with $N$-dimensional isotropic features gives a test loss which scales like $L_t \sim e^{-t/N}$ for \textit{any target function}, indicating that learning requires $t \sim N$ steps of SGD, using the optimal learning rates $\eta^* = \frac{m}{N+m+1}$. (b) Power-law features $\bm\psi \sim \mathcal{N}(0,\bm\Lambda)$ with $\Lambda_{kl} = \delta_{k,l} k^{-2}$ have non-extensive give a \textit{power-law scaling} $L_t \sim t^{-\beta}$ with exponent $\beta = O_N(1)$. (c) Learning to discrimninate MNIST 8's and 9's with $N=4000$ dimensional random ReLU features \citep{Rahimi_Recht}, generates a power law scaling at large $t$, which is both quantitatively and qualitatively different than the scaling predicted by isotropic features $e^{-t/N}$. (d)-(f) The loss at a fixed compute budget $C = tm = 100$ for (d) isotropic features, (e) power law features and (f) MNIST ReLU random features with simulations (dots average and standard deviation for $30$ runs).  Intermediate batch sizes are preferable on real data. }
    \label{fig:isotropic}
\end{figure}

\subsubsection{Special Case 2: Power Laws and Effective Dimensionality}\label{powerlaw_subsec}

\begin{figure}[t]
    \centering
    \subfigure[Power Law Exponents $b$]{\includegraphics[width=0.32\linewidth]{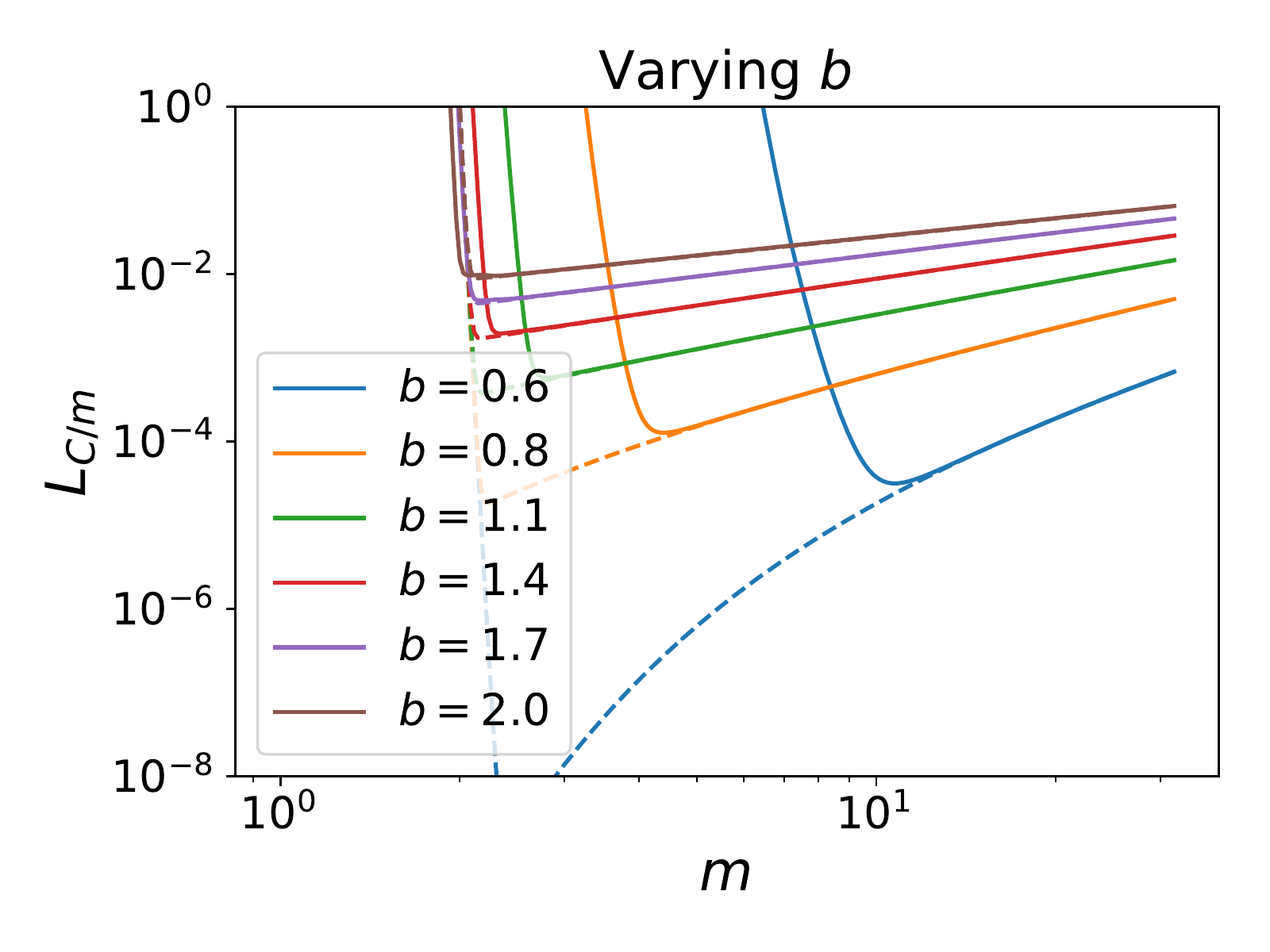}}
    \subfigure[Optimal Batchsize vs $\bm\lambda$]{\includegraphics[width=0.32\linewidth]{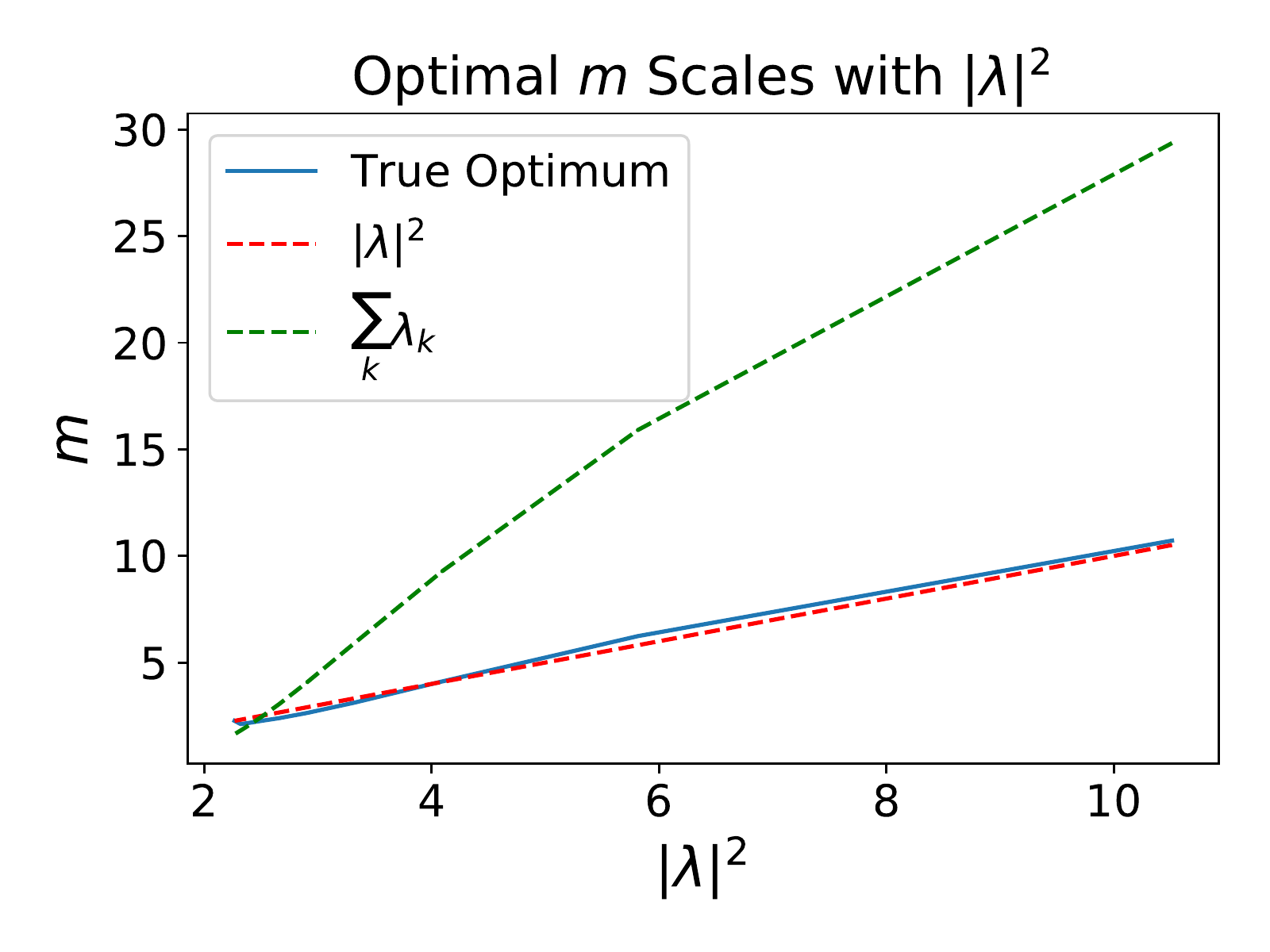}}
    \subfigure[Hyper-parameter Dependence]{\includegraphics[width=0.32\linewidth]{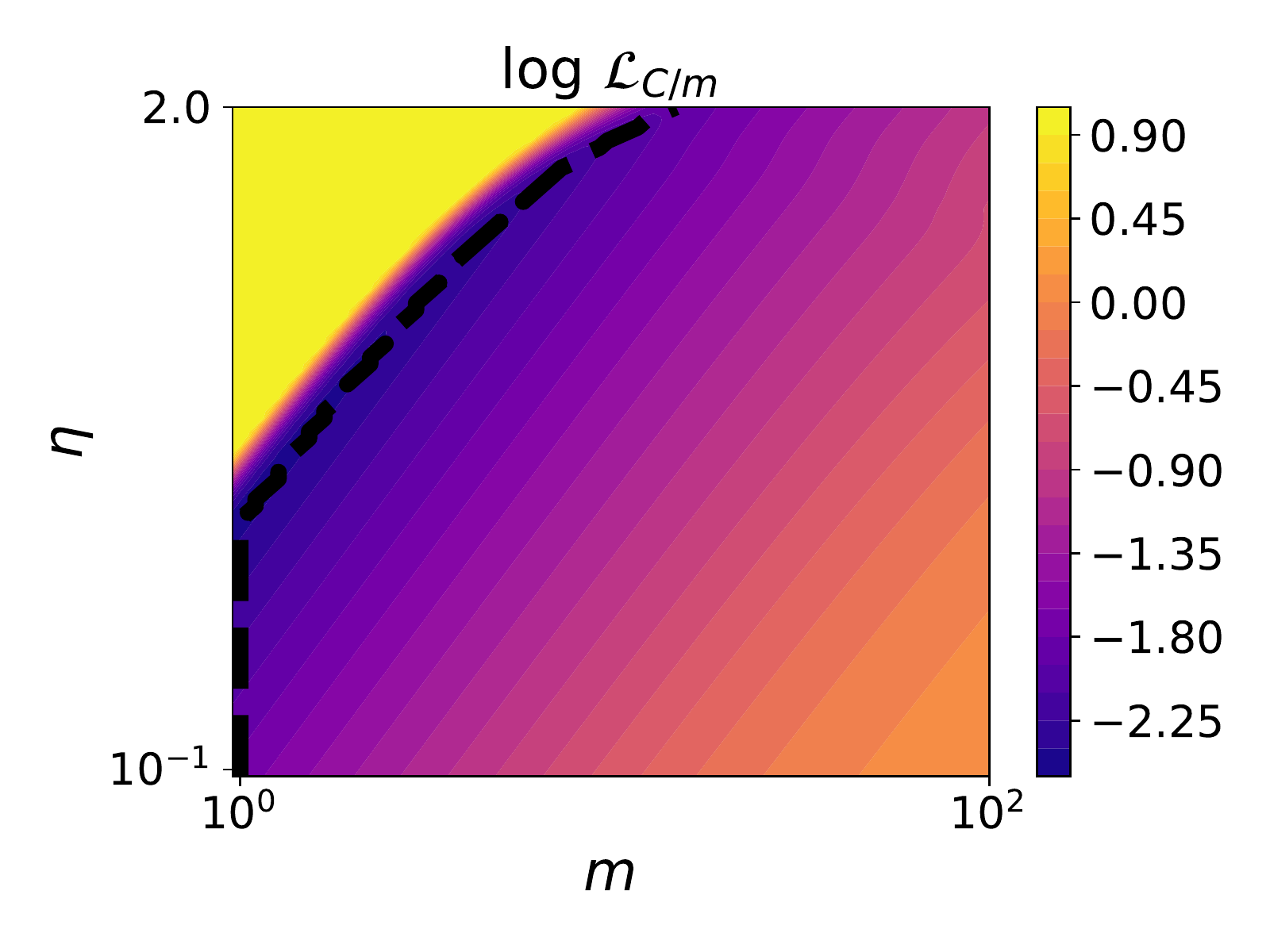}}
    \caption{Optimal batch size depends on feature structure and noise level. (a) For power law features $\lambda_k \sim k^{-b}$, $\lambda_k v_k^2 \sim k^{-a}$, the $m$ dependence of the loss $L_{C/m}$ depends strongly on the feature exponent $b$. Each color is a different $b$ value, evenly spaced in $[0.6,2.5]$ with $a=2.5, C=500$. Solid lines show exact theory while dashed lines show the error predicted by approximating the mode coupling term $\frac{\eta^2}{m} \bm\lambda\bm\lambda^\top$ with decoupled $\frac{\eta^2}{m} \text{diag}(\bm\lambda^2)$. Mode coupling is thus necessary to accurately predict optimal $m$. (b) The optimal $m$ scales proportionally with $|\bm\lambda|^2 \approx \frac{1}{2b-1}$. We plot the lower bound $m_{min}$ (black), the heuristic optimum  ($m$ which optimizes a lower bound for $L$, green) and $\frac{2 \eta}{2-\eta}|\bm\lambda|^2$ (red). (c) The loss at fixed compute $C=150$, $a=2$, $b=0.85$, optimal batchsize $m$ for each $\eta$ shown in dashed black. For sufficiently small $\eta$, the optimal batchsize is $m=1$. For large $\eta$, it is better to trade off update steps for denoised gradients resulting in $m^* > 1$.  }
    \label{fig:optimal_batch_size}
\end{figure}

Realistic datasets such as natural images or audio tend to exhibit nontrivial correlation structure, which often results in power-law spectra when the data is projected into a feature space, such as a randomly intialized neural network \citep{Spigler_2020,Canatar2020SpectralBA, bahri2021explaining}. In the $\frac{\eta^2}{m} \ll 1$ limit, if the feature spectra and task specra follow power laws, $\lambda_k \sim k^{-b}$ and $\lambda_k v_k^2 \sim k^{-a}$ with $a, b > 1$, then Theorem \ref{th_gauss} implies that generalization error also falls with a power law: $\left< L_t \right> \sim C t^{- \beta }, \quad \beta = \frac{a-1}{b}$ where $C$ is a constant. See Appendix \ref{Power_law_appendix} for a derivation with saddle point integration. 
Notably, these predicted exponents we recovered as a special case of our theory agree with prior work on SGD with power law spectra, which give exponents in terms of the feature correlation structure \citep{berthier2020tight, dieuleveut, velikanov2021universal, varre2021iterate}.   Further, our power law scaling appears to accurately match the qualitative behavior of wide neural networks trained on realistic data \citep{hestness2017deep,bahri2021explaining}, which we study in Section \ref{sec:NN}.

We show an example of such a power law scaling with synthetic features in Figure \ref{fig:isotropic} (b). Since the total variance approaches a finite value as $N\to\infty$, the learning curves are relatively insensitive to $N$, and are rather sensitive to the eigenspectrum through terms like $|\bm\lambda|^2$ and $\bm 1^\top \bm\lambda$, etc. In Figure \ref{fig:isotropic} (c), we see that the scaling of the loss is more similar to the power law setting than the isotropic features setting in a random features model of MNIST, agreeing excellently with our theory. For this model, we find that there can exist optimal batch sizes when the compute budget $C = tm$ is fixed (Figure \ref{fig:isotropic} (e) and (f)). 
In  Appendix \ref{app:heuristic_optimal_hyperparams}, we heuristically argue that the optimal batch size for power law features should scale as, $m^* \approx \frac{1}{(2b-1)}$. Figure \ref{fig:optimal_batch_size} tests this result.  

We provide further evidence of the existence of power law structure on realistic data in Figure \ref{fig:feature_spectra_learning_curve} (a)-(c), where we provide spectra and test loss learning curves for MNIST and CIFAR-10 on ReLU random features. The eigenvalues $\lambda_k \sim k^{-b}$ and the task power tail sums $\sum_{n=k}^\infty \lambda_n v_n^2 \sim k^{-a+1}$ both follow power laws, generating power law test loss curves. These learning curves are contrasted with isotropically distributed data in $\mathbb{R}^{784}$ passed through the same ReLU random feature model and we see that structured data distributions allow much faster learning than the unstructured data. Our theory is predictive across variations in learning rate, batch size and noise (Figure \ref{fig:feature_spectra_learning_curve}).

\begin{figure}[t]
    \centering
    \subfigure[Feature Spectra]{\includegraphics[width=0.32\linewidth]{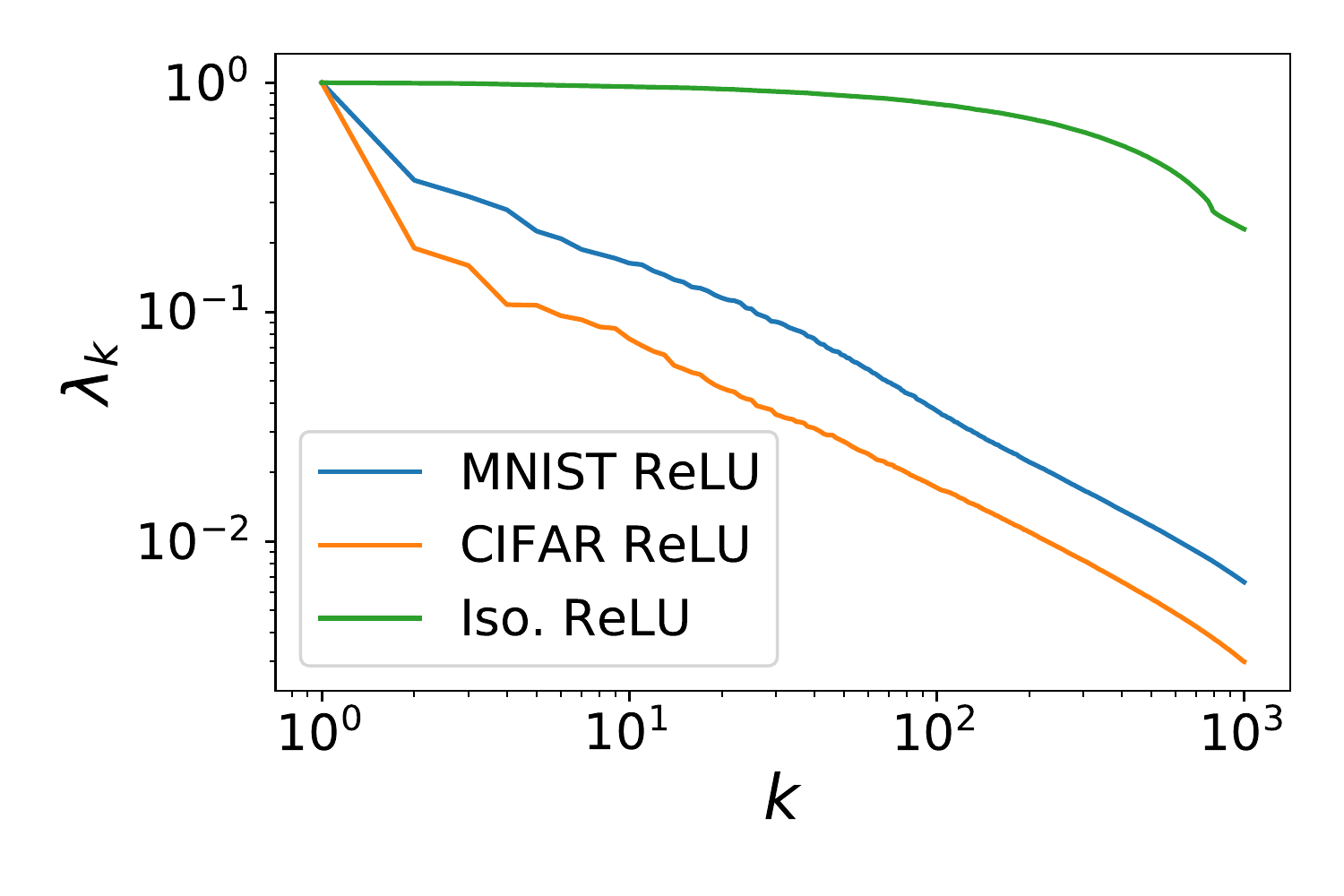}}
    \subfigure[Task Power Tail Sum ]{\includegraphics[width=0.32\linewidth]{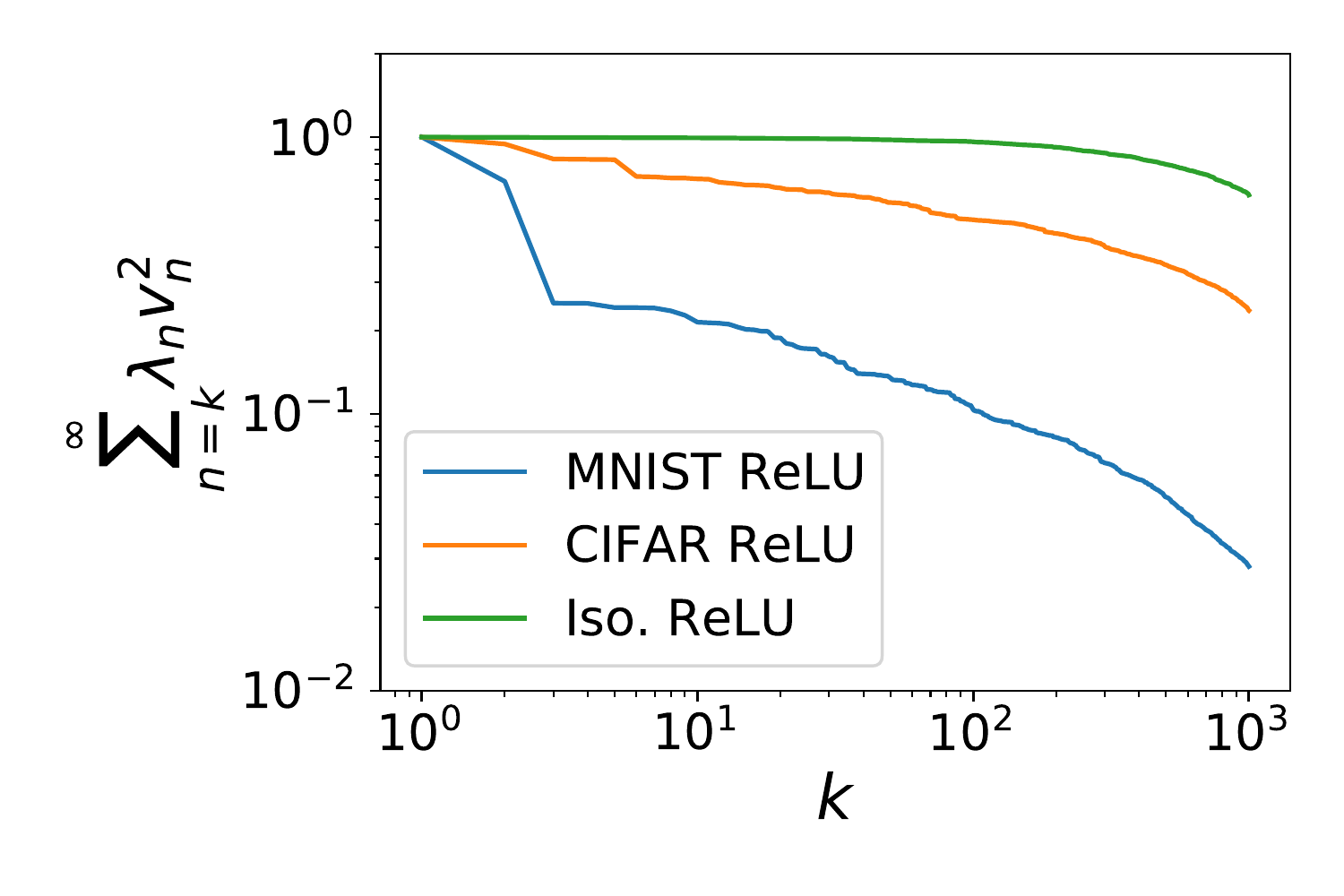}}
    \subfigure[Learning Curves $m=5$]{\includegraphics[width=0.32\linewidth]{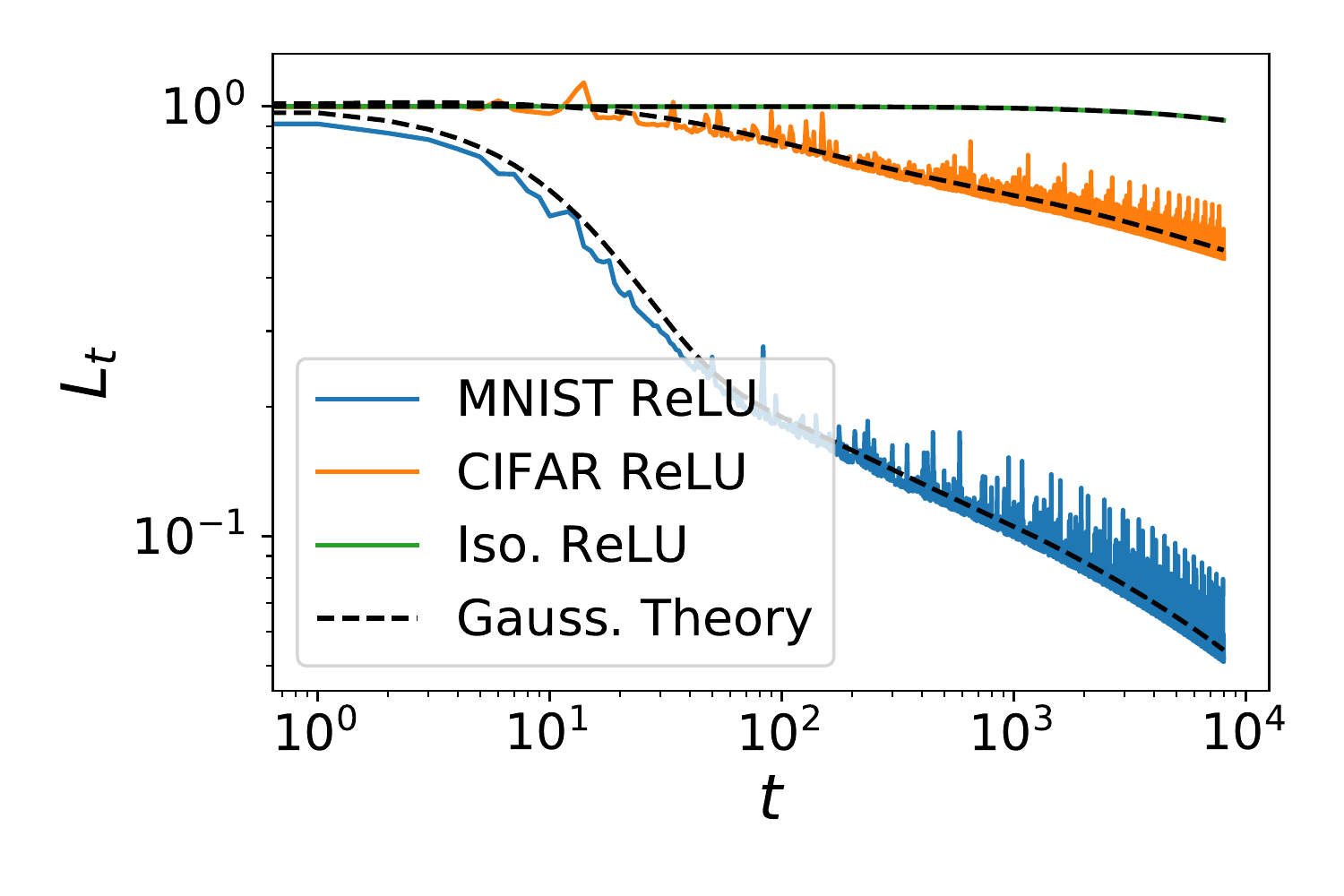}}
    \subfigure[Vary $\eta$ (20 trials) ]{\includegraphics[width=0.32\linewidth]{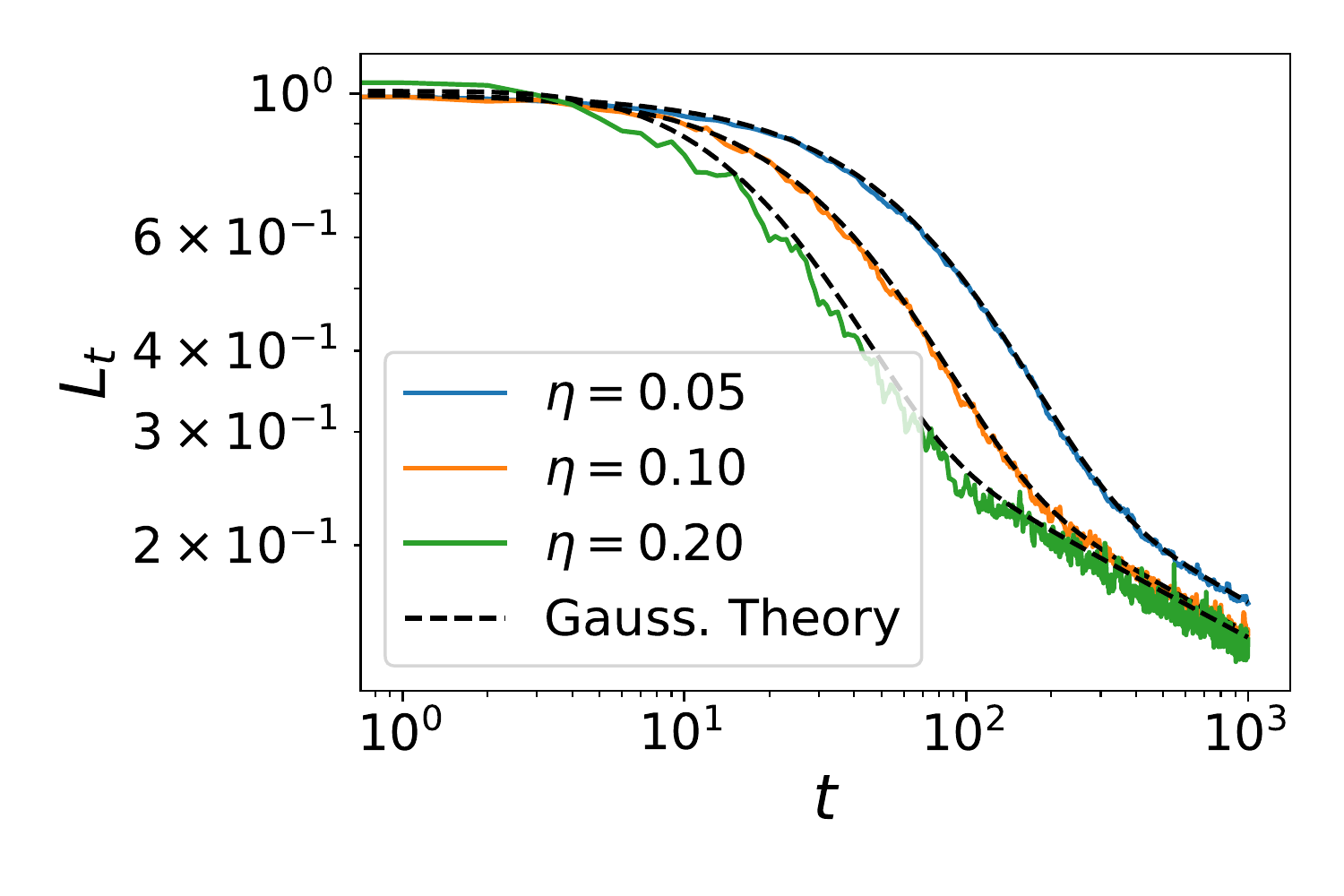}}
    \subfigure[Vary $m$ (20 Trials)]{\includegraphics[width=0.32\linewidth]{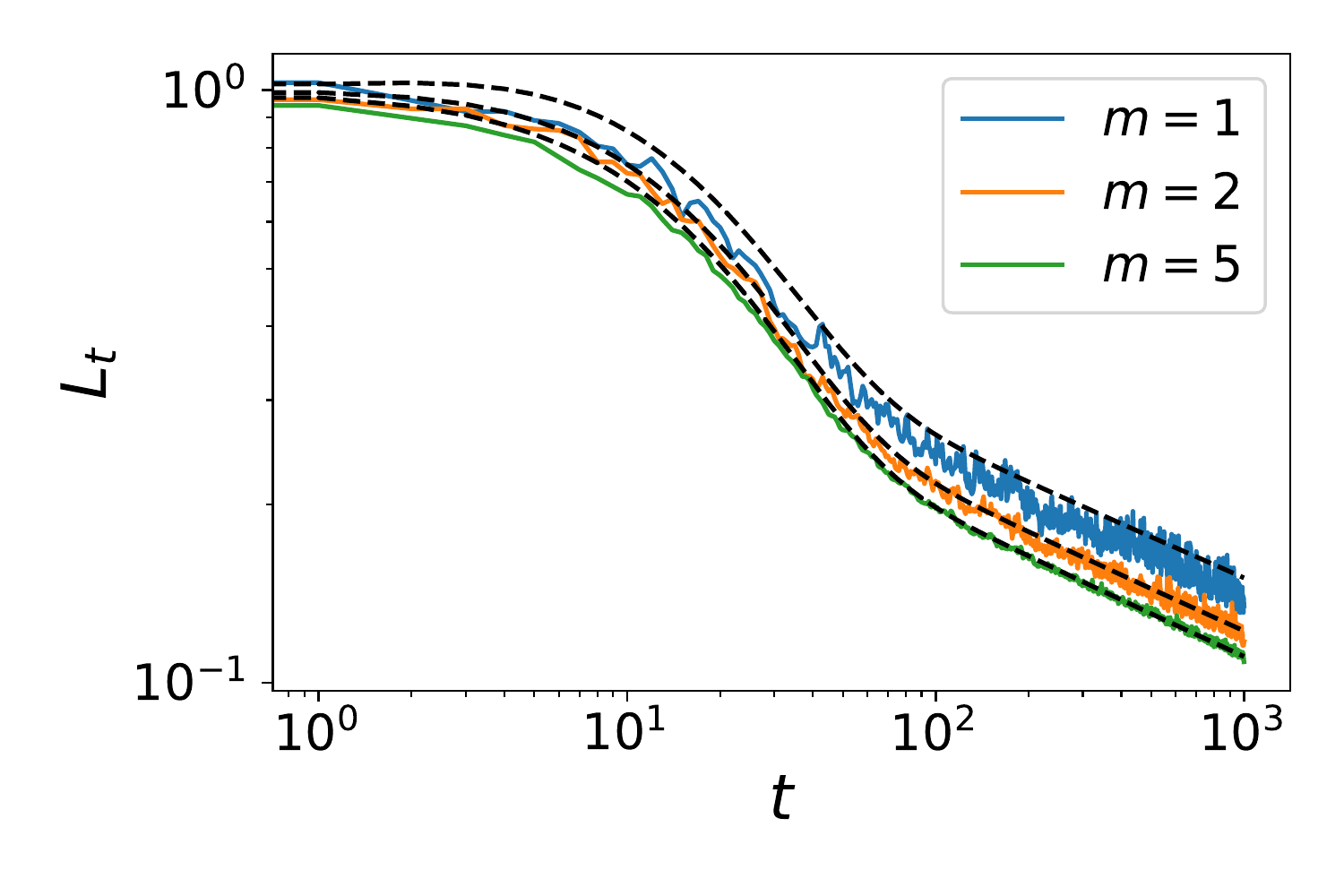}}
    \subfigure[Vary $\sigma$ (20 Trials)]{\includegraphics[width=0.32\linewidth]{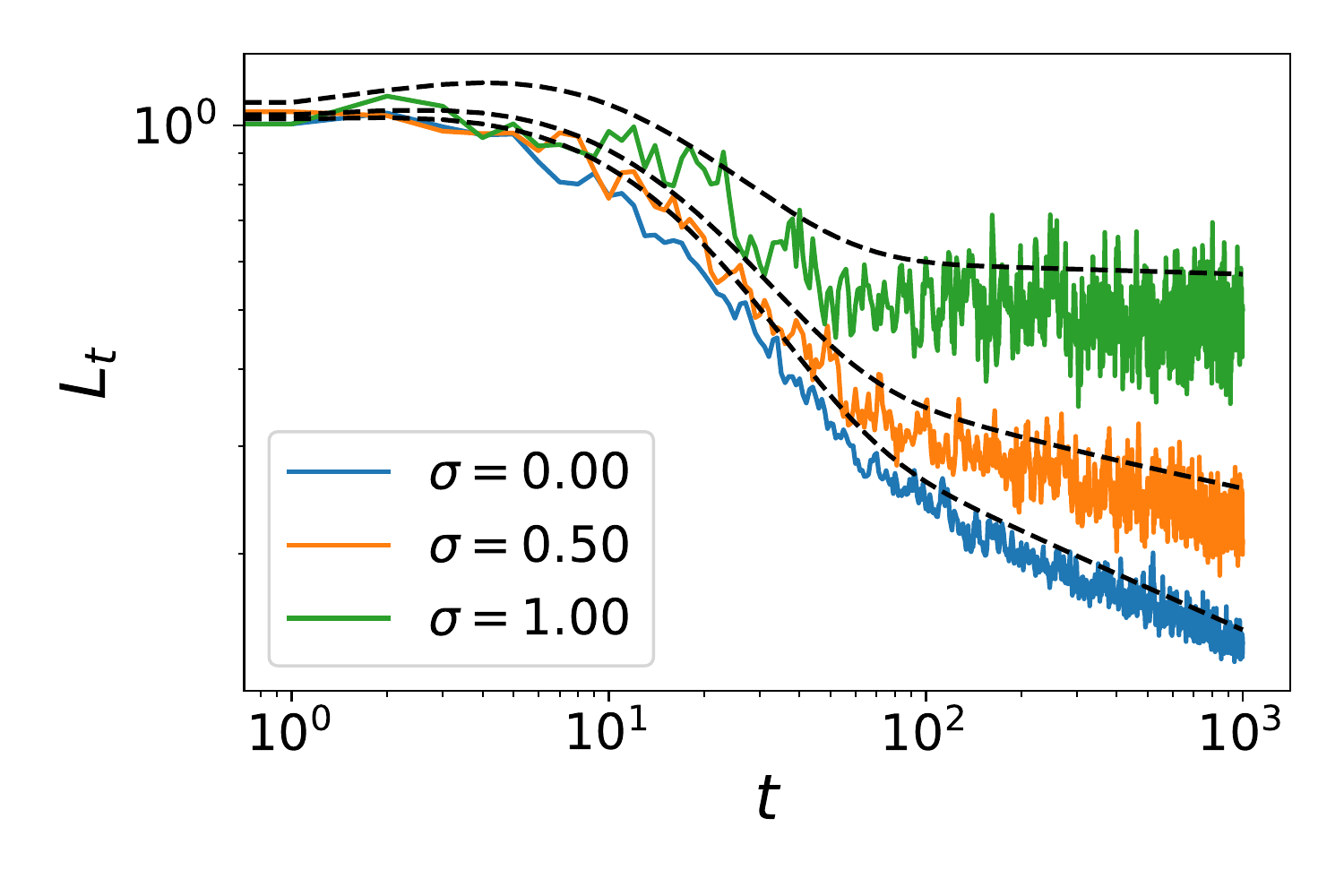}}
    
    \caption{Structure in the data distribution, nonlinearity, batchsize and learning rate all influence learning curves. (a) ReLU random feature embedding in $N=4000$ dimensions of MNIST and CIFAR images have very different eigenvalue scalings than spherically isotropic vectors in $784$ dimensions. (b) The task power spectrum decays much faster for MNIST than for random isotropic vectors. (c) Learning curves reveal the data-structure dependence of test error dynamics. Dashed lines are theory curves derived from equation. 
    (d) Increasing the learning rate increases the initial speed of learning but induces large fluctuations in the loss and can be worse at large $t$. Experiment curves averaged over $20$ random trajectories of SGD. (e) Increasing the batch size alters both the average test loss $L_t$ and the variance. (f)  Noise in the target values during training produces an asymptotic error $L_{\infty}$ which persists even as $t \to \infty$. }
    \label{fig:feature_spectra_learning_curve}
\end{figure}

\subsection{Arbitrary Induced Feature Distributions: The General Solution}\label{th_general_fourth}

The result in the previous section was proven exactly for Gaussian vectors (see Appendix \ref{gauss_derivation}). For arbitrary distributions, we obtain a slightly more involved result (see Appendix \ref{non_gauss}).

\begin{theorem}\label{arbitary_features}
Let $\bm\psi(\x) \in \mathbb{R}^N$ be an arbitrary feature map with covariance matrix $\bSigma = \sum_k \lambda_k \u_k \u_k^\top$. After diagonalizing the features $\phi_k(\x) = \u_k^\top \bm\psi(\x)$, introduce the fourth moment tensor $\kappa_{ijkl}^4 = \left< \phi_i \phi_j \phi_k \phi_l \right>$. The expected loss is exactly $\left< L_t \right> = \sum_k \lambda_k c_k( \bm\lambda, \bm \kappa, \v, t)$. 
\end{theorem}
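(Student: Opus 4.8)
The plan is to mimic the recursive strategy used for the Gaussian case in Theorem~\ref{th_gauss}, but to track the full error matrix $\C_t = \left< (\w_t - \w^*)(\w_t - \w^*)^\top \right>_{\D_t}$ in the eigenbasis of $\bSigma$ without assuming Gaussianity. First I would write the SGD update for the error vector $\bm\delta_t = \w_t - \w^*$, namely $\bm\delta_{t+1} = \bm\delta_t - \frac{\eta}{m}\sum_{\mu}\bm\psi_{t,\mu}\bm\psi_{t,\mu}^\top \bm\delta_t$ (using learnability so the target term cancels into $\bm\psi_{t,\mu}^\top\bm\delta_t$), form the outer product $\bm\delta_{t+1}\bm\delta_{t+1}^\top$, and take the expectation over the fresh minibatch $\D_t$ conditioned on $\w_t$. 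Because the $m$ samples in a batch are i.i.d., the batch-averaged quadratic terms split into a ``diagonal'' piece (same sample, weight $1/m$) and an ``off-diagonal'' piece ($m(m-1)/m^2$ distinct pairs, each factorizing through $\bSigma$). The diagonal piece is where the fourth moment enters: $\E[\bm\psi(\bm\psi^\top\C\bm\psi)\bm\psi^\top]$ is a linear function of $\C$ whose matrix elements in the eigenbasis are exactly the tensor $\kappa^4_{ijkl}$ contracted against $\C$. So one obtains a closed \emph{linear} recursion $\C_{t+1} = \mathcal{T}(\C_t)$ for a superoperator $\mathcal{T}$ built from $\I - \eta\bSigma$, $\bSigma$, and $\bm\kappa^4$, with the $1/m$ and $(m-1)/m$ weights tracking the batch size.

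Next I would project onto the relevant scalar degrees of freedom. As in the Gaussian proof, $\left< L_t\right> = \Tr(\bSigma\,\C_t) = \sum_k \lambda_k (\C_t)_{kk}$ depends only on a finite (or countable) set of coordinates of $\C_t$; writing $c_{k,t} = \u_k^\top \C_t \u_k$ and, if the fourth-moment coupling forces them in, the off-diagonal entries $(\C_t)_{kl}$, the recursion restricts to a linear map on this reduced set of ``observable'' coordinates. The initialization $\C_0 = \w^*\w^{*\top}$ gives $c_{k,0} = v_k^2$ (and off-diagonal seeds $v_kv_l$). Iterating the reduced linear map $t$ times yields $\bm c_t$ as an explicit function $c_k(\bm\lambda,\bm\kappa^4,\v,t)$ — concretely the $k$-th component of $(\text{reduced map})^t$ applied to the vectorized $\v\v^\top$ — and summing $\sum_k \lambda_k c_{k,t}$ delivers the claimed form $\left< L_t\right> = \sum_k \lambda_k c_k(\bm\lambda,\bm\kappa^4,\v,t)$. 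I would then sanity-check that when $\bm\psi$ is Gaussian, Isserlis/Wick reduces $\kappa^4_{ijkl} = \lambda_i\lambda_j\delta_{ij}\delta_{kl}+\lambda_i\lambda_k\delta_{ik}\delta_{jl}+\lambda_i\lambda_l\delta_{il}\delta_{jk}$, the recursion collapses exactly to the $\C_{t+1}$ relation of Theorem~\ref{th_gauss}, and the reduced map becomes the matrix $\A$ — confirming consistency.

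The main obstacle is bookkeeping the closure of the recursion: unlike the Gaussian case, the generic fourth-moment tensor can couple the diagonal entries $c_{k,t}$ to off-diagonal entries of $\C_t$, so ``the loss depends only on $c_{k,t}$'' need not hold verbatim; one must argue that the collection of coordinates actually appearing in the iteration closes under $\mathcal{T}$ (at worst the full set of entries $(\C_t)_{kl}$, which still closes since $\mathcal{T}$ is linear and the expectation is taken freshly each step), and then package the solution of this linear system as the functions $c_k$. A secondary subtlety, for $N=\infty$, is justifying that these operator manipulations and the interchange of expectation with the (infinite) sums are legitimate under the standing finite-variance and $\left< y^2\right><\infty$ assumptions; I would handle this exactly as in Appendix~\ref{gauss_derivation}, by truncating to finitely many modes and passing to the limit. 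Modulo these points, the derivation is the non-Gaussian analogue of the Gaussian computation and the theorem follows; the detailed form of $c_k$ is deferred to Appendix~\ref{non_gauss}.
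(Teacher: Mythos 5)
Your proposal is correct and follows essentially the same route as the paper's Appendix~\ref{non_gauss}: project $\C_t=\left<(\w_t-\w^*)(\w_t-\w^*)^\top\right>$ into the eigenbasis, split the batch average into same-sample (fourth-moment, weight $\eta^2/m$) and distinct-pair (weight $\eta^2(m-1)/m$, factorizing through $\bSigma$) contributions, and vectorize the resulting linear recursion on all $N^2$ entries of $\C_t$ with initial condition $\text{Vec}(\v\v^\top)$ — exactly the paper's $\mathbf{c}_{t}=(\G_0+\frac{\eta^2}{m}\text{Mat}(\bm\kappa))^t\text{Vec}(\v\v^\top)$, followed by $\left<L_t\right>=\sum_k\lambda_k C^t_{kk}$. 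The only blemish is a harmless index slip in your Wick sanity check (the $\delta_{ij}\delta_{kl}$ term should carry $\lambda_i\lambda_k$, not $\lambda_i\lambda_j$), which does not affect the argument.
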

We provide an exact formula for $c_k$ in the Appendix \ref{non_gauss} 
We see that the test loss dynamics depends \textit{only} on the second and fourth moments of the features through quantities $\lambda_k$ and $\kappa_{ijk\ell}$ respectively. We recover the Gaussian result as a special case when $\kappa_{ijkl}$ is a simple weighted sum of these three products of Kronecker tensors $\kappa_{ijkl}^{Gauss} = \lambda_i \lambda_j \delta_{ik}\delta_{jl} + \lambda_i \lambda_k \delta_{ij}\delta_{kl} + \lambda_{i} \lambda_j \delta_{il}\delta_{jk}$. As an alternative to the above closed form expression for $\left< L_t \right>$, a recursive formula which tracks $N$ mixing coefficients has also been used to analyze the test loss dynamics for arbitrary distributions \citep{varre2021iterate}. 

 Next we show that a regularity condition, similar to those assumed in other recent works \citep{jain2018accelerating,berthier2020tight, varre2021iterate}, on the fourth moment structure of the features allows derivation of an upper bound which is qualitatively similar to the Gaussian theory. 
\begin{theorem}\label{th_regularity}
If the fourth moments satisfy $\left< \bm\psi \bm\psi^\top \bm G \bm\psi \bm\psi^\top  \right> \preceq (\alpha+1) \bm\Sigma \bm G \bm \Sigma + \alpha \bm\Sigma \text{Tr} \bm\Sigma \bm G$ for any positive-semidefinite $\bm G$, then
\begin{align}
    L_t \leq \bm\lambda^\top \A^t \mathbf{v}^2 \ , \ \A = \left( \I - \eta  \ \text{diag}(\bm\lambda) \right)^2 + \frac{\alpha \eta^2}{m} \left[ \text{diag}(\bm\lambda^2) +  \bm\lambda\bm\lambda^\top \right].
\end{align}
\end{theorem}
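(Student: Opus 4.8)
The plan is to mirror the proof of Theorem~\ref{th_gauss}, replacing the exact Gaussian fourth-moment identity with the assumed inequality and carrying the resulting inequality through the recursion. Recall that $\left< L_t \right> = \Tr\, \bSigma\, \C_t$ with $\C_t = \left< (\w_t-\w^*)(\w_t-\w^*)^\top \right>_{\mathcal D_t}$. First I would re-derive the one-step update for $\C_t$ without the Gaussian assumption. Expanding $\w_{t+1}-\w^* = (\I - \frac{\eta}{m}\sum_\mu \bm\psi_{t,\mu}\bm\psi_{t,\mu}^\top)(\w_t-\w^*)$, taking the conditional expectation over the fresh minibatch $\{\bm\psi_{t,\mu}\}$ (independent of $\C_t$), and using $\left<\bm\psi\bm\psi^\top\right> = \bSigma$, the cross terms give the deterministic part $(\I-\eta\bSigma)\C_t(\I-\eta\bSigma)$ plus a correction, and the diagonal $\mu=\nu$ terms contribute $\frac{\eta^2}{m}\left<\bm\psi\bm\psi^\top \C_t \bm\psi\bm\psi^\top\right>$ minus the piece already counted. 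The net identity is
\begin{equation}
    \C_{t+1} = (\I-\eta\bSigma)\C_t(\I-\eta\bSigma) + \frac{\eta^2}{m}\Big[ \left<\bm\psi\bm\psi^\top \C_t \bm\psi\bm\psi^\top\right> - \bSigma\C_t\bSigma \Big].
\end{equation}
Now I would apply the regularity hypothesis with $\bm G = \C_t$, which is positive-semidefinite: $\left<\bm\psi\bm\psi^\top\C_t\bm\psi\bm\psi^\top\right> \preceq (\alpha+1)\bSigma\C_t\bSigma + \alpha\,\bSigma\,\Tr(\bSigma\C_t)$, so the bracketed correction is $\preceq \alpha[\bSigma\C_t\bSigma + \bSigma\Tr(\bSigma\C_t)]$. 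This yields the matrix inequality $\C_{t+1} \preceq (\I-\eta\bSigma)\C_t(\I-\eta\bSigma) + \frac{\alpha\eta^2}{m}[\bSigma\C_t\bSigma + \bSigma\Tr(\bSigma\C_t)] =: \mathcal{T}(\C_t)$.

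The key structural observations to push this through are: (i) the map $\mathcal{T}$ is \emph{linear} in its argument and \emph{monotone} with respect to the positive-semidefinite (Loewner) order, i.e. $\bm 0 \preceq \X \preceq \Y$ implies $\mathcal{T}(\X) \preceq \mathcal{T}(\Y)$ — this follows because each term $\bm B \X \bm B$ and $\bSigma\Tr(\bSigma\X)$ is monotone in $\X \succeq 0$; and (ii) $\C_t$ remains positive-semidefinite for all $t$. Given (i)-(ii), from $\C_1 \preceq \mathcal{T}(\C_0)$ and monotonicity one gets $\C_2 \preceq \mathcal{T}(\C_1) \preceq \mathcal{T}(\mathcal{T}(\C_0))$, and inductively $\C_t \preceq \mathcal{T}^t(\C_0)$ with $\C_0 = \w^*\w^{*\top}$. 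Then $\left< L_t \right> = \Tr(\bSigma\C_t) \leq \Tr(\bSigma\,\mathcal{T}^t(\C_0))$, and it remains to evaluate the right-hand side exactly. Here I would repeat the diagonalization argument from Theorem~\ref{th_gauss}: since $\mathcal{T}$ is built from $\bSigma$ and traces, working in the eigenbasis $\u_k$ and tracking only $c_{k,t} = \u_k^\top \C_t \u_k$ (the off-diagonal entries decouple from the trace functional one needs), the recursion $\mathcal{T}$ acts on the vector $\bm c_t$ exactly as multiplication by $\A = (\I-\eta\,\text{diag}(\bm\lambda))^2 + \frac{\alpha\eta^2}{m}[\text{diag}(\bm\lambda^2) + \bm\lambda\bm\lambda^\top]$, so $\Tr(\bSigma\,\mathcal{T}^t(\C_0)) = \bm\lambda^\top \A^t \bm v^2$, giving the claim.

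The main obstacle I anticipate is the step of reducing the matrix recursion to a closed scalar recursion on $\bm c_t$ while maintaining the inequality — one must check that the functional $\left< L_t\right> = \sum_k \lambda_k c_{k,t}$ depends on $\C_t$ only through its diagonal $c_{k,t}$ in the $\u_k$ basis and that $\mathcal{T}$ maps this diagonal data to itself, i.e. that the off-diagonal blocks of $\mathcal{T}^t(\C_0)$ never feed back into the diagonal; this is exactly the computation already carried out in Appendix~\ref{gauss_derivation} for the Gaussian case, since $\A$ there has precisely the same form up to the replacement $\frac{\eta^2}{m}\to\frac{\alpha\eta^2}{m}$, so the bookkeeping transfers verbatim. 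A secondary point requiring care is justifying that the Loewner inequality $\C_t \preceq \mathcal{T}^{t}(\C_0)$ can be combined with the positive operator $\X\mapsto\Tr(\bSigma\X)$ to get the scalar inequality — this is immediate since $\bSigma\succeq 0$ implies $\Tr(\bSigma(\Y-\X))\geq 0$ whenever $\Y\succeq\X$. Everything else is the same algebra as Theorem~\ref{th_gauss}.
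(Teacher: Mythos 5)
Your proposal is correct and follows essentially the same route as the paper: apply the fourth-moment regularity bound with $\bm G = \C_t$ inside the exact one-step recursion for $\C_t$, then propagate the resulting inequality through time and read off the loss in the eigenbasis of $\bSigma$, exactly as in the Gaussian proof with $\frac{\eta^2}{m}\to\frac{\alpha\eta^2}{m}$. The only (harmless) difference is bookkeeping: you iterate the matrix inequality $\C_t\preceq\mathcal{T}^t(\C_0)$ using Loewner monotonicity of $\mathcal{T}$ and project to the diagonal at the end, whereas the paper projects to $c_{k,t}=\u_k^\top\C_t\u_k$ first and iterates the elementwise vector inequality $\bm c_{t+1}\le\A\bm c_t$ (valid because $\A$ and $\bm\lambda$ have nonnegative entries).
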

We provide this proof in Appendix \ref{app_proof_regularity}. We note that the assumed bound on the fourth moments is tight for Gaussian features with $\alpha=1$, recovering our previous theory. Thus, if this condition on the fourth moments is satisfied, then the loss for the non-Gaussian features is upper bounded by the Gaussian test loss theory with the batch size effectively altered $\tilde{m} = m / \alpha$. 

The question remains whether the Gaussian approximation will provide an accurate model on \textit{realistic data}. We do not provide a proof of this conjecture, but verify its accuracy in empirical experiments on MNIST and CIFAR-10 as shown in Figure \ref{fig:feature_spectra_learning_curve}. In Appendix Figure \ref{fig:nonlin_batch_noise}, we show that the fourth moment matrix for a ReLU random feature model and its projection along the eigenbasis of the feature covariance is accurately approximated by the equivalent Gaussian model. 

\subsection{Unlearnable or Noise Corrupted Problems}

In general, the target function $y(\x)$ may depend on features which cannot be expressed as linear combinations of features $\bm\psi(\x)$, $y(\x) = \w^* \cdot \bm\psi(\x) + y_\perp(\x)$. Let $\left< y_\perp(\x)^2 \right>_\x=\sigma^2$. Note that $y_{\perp}$ need not be deterministic, but can also be a stochastic process which is uncorrelated with $\bm\psi(\x)$.

\begin{theorem}\label{th_unlearnable}
For a target function with unlearnable variance $\left< y_{\perp}^2 \right> = \sigma^2$ trained on Gaussian $\bm\psi$, the expected test loss has the form
\begin{equation}
    \left< L_t \right> - \sigma^2 = \bm\lambda^\top \A^t \mathbf{v}^2 + \frac{1}{m} \eta^2\sigma^2 \bm\lambda^\top (\I - \A)^{-1} (\I - \A^t) \bm\lambda
\end{equation}
which has an asymptotic, irreducible error $\left< L_\infty \right> = \sigma^2 + \frac{1}{m} \eta^2 \sigma^2 \bm\lambda^\top (\I - \A)^{-1} \bm\lambda$ as $t \to \infty$.
\end{theorem}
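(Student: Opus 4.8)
The plan is to redo the error-matrix recursion of Theorem~\ref{th_gauss} but now tracking the extra contribution of the unlearnable noise $y_\perp$ to the stochastic gradient. Write $\w_t - \w^* = \bm\delta_t$. The SGD update now reads
\begin{equation}
  \bm\delta_{t+1} = \bm\delta_t - \frac{\eta}{m}\sum_{\mu}\bm\psi_{t,\mu}\big(\bm\psi_{t,\mu}^\top\bm\delta_t\big) + \frac{\eta}{m}\sum_\mu \bm\psi_{t,\mu}\, y_{\perp,t,\mu},
\end{equation}
since $y_{t,\mu} = \w^*\cdot\bm\psi_{t,\mu} + y_{\perp,t,\mu}$. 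The new term is zero-mean and, crucially, uncorrelated with $\bm\delta_t$ and with the learnable part of the gradient, because $\langle y_\perp \bm\psi\rangle = 0$ and $y_\perp$ at the new samples is independent of the past. Forming $\C_{t+1} = \langle \bm\delta_{t+1}\bm\delta_{t+1}^\top\rangle$ and expanding, the cross terms between the old recursion and the noise term vanish in expectation, so we get the old recursion plus one additive term coming from $\frac{\eta^2}{m^2}\sum_{\mu,\nu}\langle \bm\psi_{t,\mu} y_{\perp,t,\mu} y_{\perp,t,\nu}\bm\psi_{t,\nu}^\top\rangle$. Only $\mu=\nu$ survives (distinct samples independent, $y_\perp$ mean zero), giving $\frac{\eta^2}{m}\langle y_\perp^2\rangle\,\langle\bm\psi\bm\psi^\top\rangle = \frac{\eta^2\sigma^2}{m}\bSigma$. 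Hence
\begin{equation}
  \C_{t+1} = (\I-\eta\bSigma)\C_t(\I-\eta\bSigma) + \frac{\eta^2}{m}\big[\bSigma\C_t\bSigma + \bSigma\,\Tr(\bSigma\C_t)\big] + \frac{\eta^2\sigma^2}{m}\bSigma.
\end{equation}

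Next I would diagonalize in the eigenbasis of $\bSigma$, exactly as in the proof of Theorem~\ref{th_gauss}: the loss depends only on $c_{k,t} = \u_k^\top\C_t\u_k$, and the first two groups of terms act on the vector $\bm c_t$ through the matrix $\A$ of Theorem~\ref{th_gauss}. The new source term contributes $\frac{\eta^2\sigma^2}{m}\bm\lambda$ to $\bm c_{t+1}$ (since $\u_k^\top\bSigma\u_k = \lambda_k$). Note also that the initial condition is still $\bm c_0 = \bm v^2$ and that the test loss is $\langle L_t\rangle = \bm\lambda^\top\bm c_t + \sigma^2$ (the $\sigma^2$ is the irreducible $\langle y_\perp^2\rangle$ floor, present for any $\w_t$). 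So we are left with the affine recursion
\begin{equation}
  \bm c_{t+1} = \A\,\bm c_t + \frac{\eta^2\sigma^2}{m}\bm\lambda, \qquad \bm c_0 = \bm v^2.
\end{equation}

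Finally I would solve this linear recursion in closed form: unrolling gives $\bm c_t = \A^t\bm v^2 + \frac{\eta^2\sigma^2}{m}\sum_{s=0}^{t-1}\A^s\bm\lambda = \A^t\bm v^2 + \frac{\eta^2\sigma^2}{m}(\I-\A)^{-1}(\I-\A^t)\bm\lambda$, using the geometric-series identity (valid since the relevant spectral condition from the stability discussion makes $\I-\A$ invertible). Taking $\bm\lambda^\top(\cdot)$ and adding $\sigma^2$ yields the stated formula for $\langle L_t\rangle - \sigma^2$, and letting $t\to\infty$ (so $\A^t\to 0$ under the stability condition) gives $\langle L_\infty\rangle = \sigma^2 + \frac{\eta^2\sigma^2}{m}\bm\lambda^\top(\I-\A)^{-1}\bm\lambda$. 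The only genuinely delicate point is the bookkeeping that establishes the cross terms between the old-style fluctuation term and the new noise term vanish; this uses both $\langle y_\perp\bm\psi\rangle=0$ and the independence of the fresh minibatch from $\bm\delta_t$, and it is what keeps the recursion affine rather than introducing $\sigma$-dependent corrections into $\A$ itself. Everything else is the same Gaussian moment computation (Isserlis/Wick) already carried out for Theorem~\ref{th_gauss}, plus elementary manipulation of a geometric series.
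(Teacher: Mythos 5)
Your proposal is correct and follows essentially the same route as the paper's own proof: augment the error-matrix recursion with the additive $\frac{\eta^2\sigma^2}{m}\bSigma$ source term from the unlearnable component, project onto the eigenbasis of $\bSigma$ to get the affine recursion $\bm c_{t+1} = \A\bm c_t + \frac{\eta^2\sigma^2}{m}\bm\lambda$, and unroll the geometric series. If anything your bookkeeping of the $1/m$ prefactor on the noise source is more careful than the appendix's (which drops it in an intermediate line even though it reappears in the final statement).
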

See Appendix \ref{app_proof_unlearnable} for the proof. The convergence to the asymptotic error takes the form $\left< L_t - L_\infty\right> = \bm\lambda^\top \A^t \left( \mathbf{v}^2 - \frac{1}{m} \eta^2 \sigma^2 (\I - \A)^{-1} \bm\lambda \right)$. We note that this quantity is not necessarily monotonic in $t$ and can exhibit local maxima for sufficiently large $\sigma^2$, as in Figure \ref{fig:feature_spectra_learning_curve} (f).  


\subsection{Test/Train Splits}\label{sec:test_train_split}
Rather than interpreting our theory as a description of the average test loss during SGD in a one-pass setting, where data points are sampled from the a distribution at each step of SGD, our theory can be suitably modified to accommodate multiple random passes over a finite training set. To accomplish this, one must first recognize that the training and test distributions are different. 
\begin{theorem}\label{thm_test_train_split}
Let $\hat{p}(\x) = \frac{1}{M} \sum_\mu \delta(\x-\x^\mu)$ be the empirical distribution on the $M$ training data points and let $\bm{\hat\Sigma} = \left< \bm\psi(\x) \bm\psi(\x)^\top \right>_{\x\sim\hat{p}(\x)} = \sum_k \hat{\lambda}_k \u_k \u_k^\top $ be the feature correlation matrix on this training set. Let $p(\x)$ be the test distribution $\bm\Sigma$ its corresponding feature correlation. Then we have
\begin{align}
   \left< L_{\text{train}, t} \right> &= \text{Tr}\left[  \hat{\bm \Sigma} \C_t \right] \ , \quad \left< L_{\text{test}, t} \right> = \text{Tr}\left[ \bm\Sigma \C_t \right] \nonumber 
   \\
   \C_{t+1} &= (\I - \eta \bm{\hat\Sigma}) \C_t (\I - \eta \bm{\hat \Sigma} ) + \frac{\eta^2}{m}\left[\left< \bm\psi(\x) \bm\psi(\x)^\top \C_t \bm\psi(\x) \bm\psi(\x)^\top \right>_{\x\sim \hat{p}(\x)} - \bm{\hat\Sigma} \C_t \bm{\hat\Sigma} \right]
\end{align}
\end{theorem}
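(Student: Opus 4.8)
Write $\e_t := \w_t - \w^*$ and $\C_t := \langle \e_t\e_t^\top\rangle_{\D_t}$. The plan is to reuse the error-covariance recursion underlying Theorems \ref{th_gauss}--\ref{th_regularity}, with two modifications: (i) the per-step data average is now taken against the empirical measure $\hat p(\x)$ rather than $p(\x)$; (ii) the train and test losses are obtained by contracting the \emph{same} matrix $\C_t$ against $\bm{\hat\Sigma}$ and $\bSigma$ respectively. Throughout I use the learnability assumption of Section \ref{sec:learnable_noise_free} in the form that $\w^*$ interpolates the training set, $y(\x^\mu) = \w^*\cdot\bm\psi(\x^\mu)$ for all $M$ points (the noise-corrupted and generally-unlearnable versions then add the same $y_\perp$-contributions handled in Theorem \ref{th_unlearnable}). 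Under this, $L_{\text{train},t} = \langle(\bm\psi^\top\e_t)^2\rangle_{\hat p}$ and $L_{\text{test},t} = \langle(\bm\psi^\top\e_t)^2\rangle_{p}$ give $\langle L_{\text{train},t}\rangle_{\D_t} = \Tr[\bm{\hat\Sigma}\C_t]$ and $\langle L_{\text{test},t}\rangle_{\D_t} = \Tr[\bSigma\C_t]$, so what remains is the recursion for $\C_t$.

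\textbf{Main steps.} First I would subtract $\w^*$ from the SGD update and use learnability on the sampled (training) points to obtain $\e_{t+1} = \bigl(\I - \tfrac{\eta}{m}\sum_{\mu=1}^m \bm\psi_{t,\mu}\bm\psi_{t,\mu}^\top\bigr)\e_t$. The key point for the multi-pass case is that the $m$ minibatch indices at step $t$ are drawn i.i.d.\ uniformly from $\{1,\dots,M\}$ independently of the history, so each $\bm\psi_{t,\mu}$ is an independent fresh draw from $\hat p$ and is independent of $\e_t$; this is exactly what lets the one-pass argument run verbatim with $\hat p$ in place of $p$. Conditioning on $\e_t$ and expanding $\e_{t+1}\e_{t+1}^\top$, the linear term uses $\langle\tfrac1m\sum_\mu\bm\psi_{t,\mu}\bm\psi_{t,\mu}^\top\rangle_{\hat p} = \bm{\hat\Sigma}$, while the quadratic term is handled by splitting the $(\mu,\nu)$ double sum into $m$ diagonal terms, each contributing $\langle\bm\psi\bm\psi^\top\e_t\e_t^\top\bm\psi\bm\psi^\top\rangle_{\hat p}$, and $m(m-1)$ off-diagonal terms, each factoring by independence into $\bm{\hat\Sigma}\e_t\e_t^\top\bm{\hat\Sigma}$. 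Dividing by $m^2$ and using $\tfrac{m-1}{m}=1-\tfrac1m$ to recombine the $\eta^2\bm{\hat\Sigma}\e_t\e_t^\top\bm{\hat\Sigma}$ pieces into $(\I-\eta\bm{\hat\Sigma})\e_t\e_t^\top(\I-\eta\bm{\hat\Sigma})$ leaves
\begin{align*}
\langle\e_{t+1}\e_{t+1}^\top\mid\e_t\rangle &= (\I-\eta\bm{\hat\Sigma})\,\e_t\e_t^\top\,(\I-\eta\bm{\hat\Sigma}) \\
&\quad + \tfrac{\eta^2}{m}\bigl[\langle\bm\psi\bm\psi^\top\e_t\e_t^\top\bm\psi\bm\psi^\top\rangle_{\hat p} - \bm{\hat\Sigma}\,\e_t\e_t^\top\,\bm{\hat\Sigma}\bigr].
\end{align*}
Since every term on the right is linear in $\e_t\e_t^\top$ (the fourth-moment term being linear in its central factor), taking the remaining expectation over $\D_t$ replaces $\e_t\e_t^\top$ by $\C_t$ throughout; with $\C_0 = \e_0\e_0^\top$, induction on $t$ gives the stated recursion, and hence the two loss formulas.

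\textbf{Expected difficulty.} This is a bookkeeping extension of Theorem \ref{th_gauss} rather than a new technical core. The one step that genuinely needs care is the justification that multi-pass sampling reduces to an independent fresh draw from $\hat p$ at each step: this holds for sampling \emph{with replacement} --- which is precisely what keeps the recursion closed --- but fails for without-replacement or cyclic passes, where cross-pass correlations would appear in $\langle\e_{t+1}\e_{t+1}^\top\mid\e_t\rangle$. A secondary point worth a remark is that the empirical feature vectors on a finite point cloud are never Gaussian, so, unlike Theorem \ref{th_gauss}, this recursion does not decouple into scalar modes; to actually evaluate $\langle L_t\rangle$ one iterates the matrix recursion directly using the empirical fourth-moment tensor as in Theorem \ref{arbitary_features}, or bounds it through the regularity condition of Theorem \ref{th_regularity}.
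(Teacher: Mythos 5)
Your proposal is correct and follows essentially the same route as the paper's Appendix \ref{app:test_train_split}: subtract $\w^*$, use that minibatch points are i.i.d.\ draws from $\hat p$ (with replacement) independent of $\e_t$, split the $(\mu,\nu)$ double sum into diagonal fourth-moment terms and off-diagonal $\bm{\hat\Sigma}\C_t\bm{\hat\Sigma}$ terms, and contract the resulting $\C_t$ against $\bm{\hat\Sigma}$ and $\bSigma$ for the two losses. Your added remarks — that the reduction relies on with-replacement sampling and that the empirical features are not Gaussian, so one must either iterate the matrix recursion with the empirical fourth moment or invoke a Gaussian approximation — are consistent with what the paper does (it applies the Gaussian simplification only afterward, as an approximation).
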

We provide the proof of this theorem in Appendix \ref{app:test_train_split}. The interpretation of this result is that it provides the expected training and test loss if, at each step of SGD, $m$ points from the training set $\{\x^1,...,\x^M\}$ are sampled uniformly with replacement and used to calculate a stochastic gradient. Note that while $\bm\Sigma$ can be full rank, the rank of $\bm{\hat\Sigma}$ has rank upper bounded by $M$, the number of training samples. The recurrence for $\C_t$ can again be more easily solved under a Gaussian approximation which we employ in Figure \ref{fig:test_train_split}. Since learning will only occur along the $M$ dimensional subspace spanned by the data, the test error will have an irreducible component at large time, as evidenced in Figure \ref{fig:test_train_split}. While the training errors continue to go to zero, the test errors saturate at a $M$-dependent final loss. This result can also allow one to predict errors on other test distributions. 
\begin{figure}[t]
    \centering
    \subfigure[MNIST Training Error]{\includegraphics[width=0.32\linewidth]{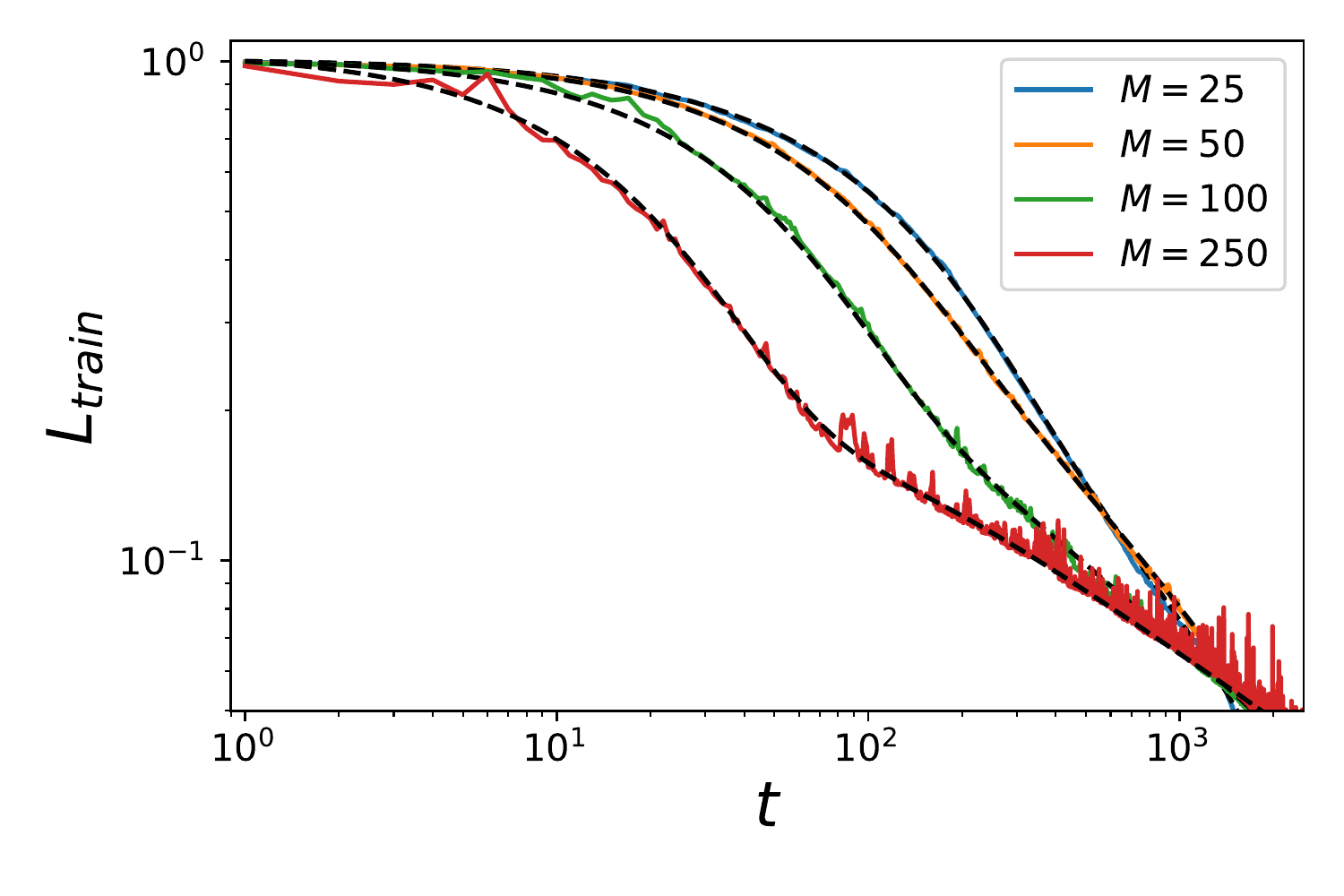}}
    \subfigure[MNIST Test Error]{\includegraphics[width=0.32\linewidth]{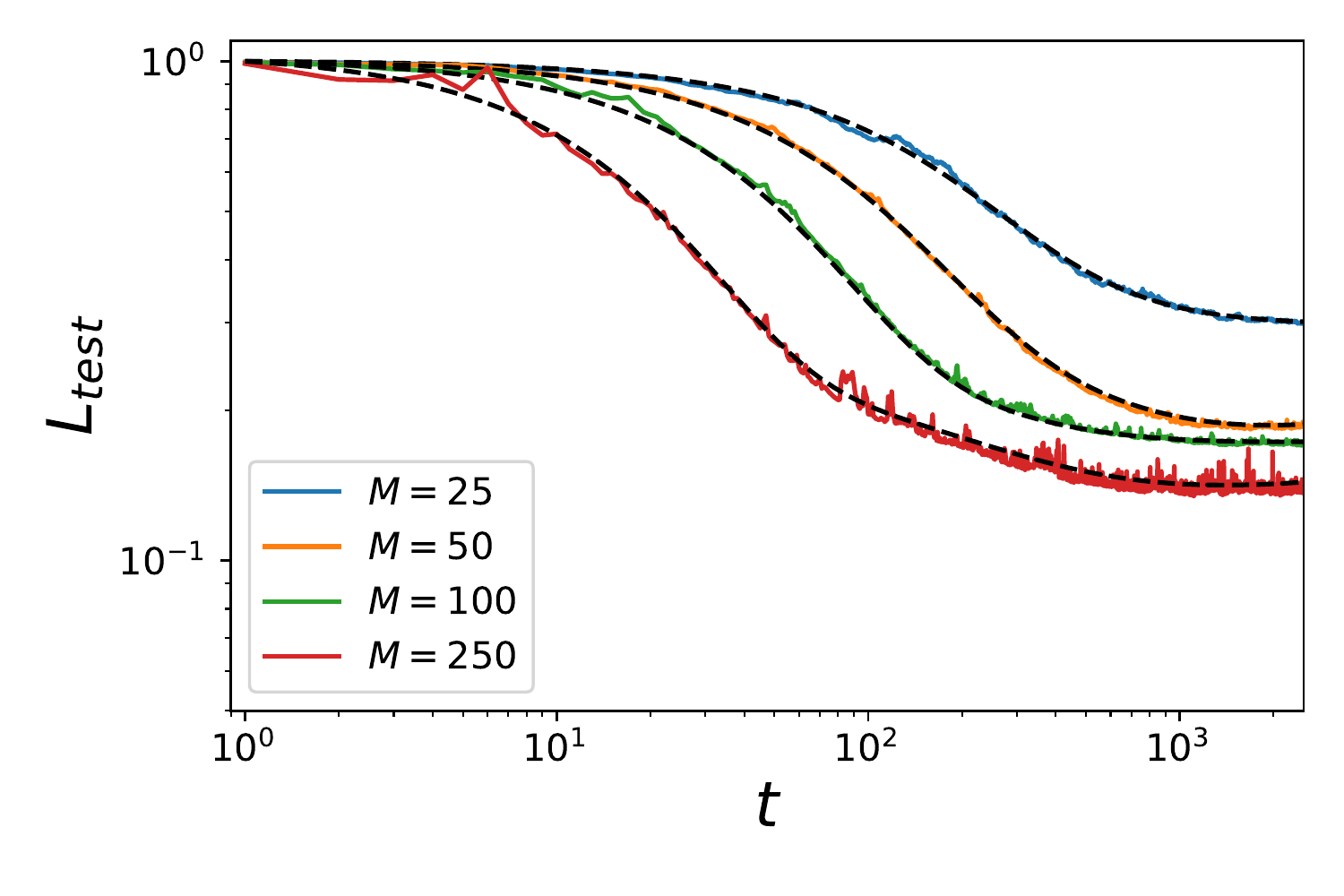}}
    \caption{Training and test errors of a model trained on a training set of size $M$ can be computed with the $\C_t$ matrix. Dashed black lines are theory. (a) The training error for MNIST random feature model approaches zero asymptotically. (b) The test error saturates to a quantity dependent on $M$. }
    \label{fig:test_train_split}
\end{figure}

\section{Comparing Neural Network Feature Maps}\label{sec:NN}

We can utilize our theory to compare how wide neural networks of different depths generalize when trained with SGD on a real dataset. With a certain parameterization, large width NNs are approximately linear in their parameters \citep{Lee_2020}. To predict test loss dynamics with our theory, it therefore suffices to characterize the geometry of the gradient features $\bm\psi(\x) = \nabla_{\bm\theta} f(\x,\bm\theta)$. In Figure \ref{fig:NTK_fig}, we show the Neural Tangent Kernel (NTK) eigenspectra and task-power spectra for fully connected neural networks of varying depth, calculated with the Neural Tangents API \citep{neuraltangents2020}. We compute the kernel on a subset of $10,000$ randomly sampled MNIST images and estimate the power law exponents for the kernel and task spectra $\lambda_k$ and $v_k^2$. Across architectures, the task spectra $v_k^2$ are highly similar, but that the kernel eigenvalues $\lambda_k$ decay more slowly for deeper models, corresponding to a smaller exponent $b$. As a consequence, deeper neural network models train more quickly during stochastic gradient descent as we show in Figure \ref{fig:NTK_fig} (c). After fitting power laws to the spectra $\lambda_k \sim k^{-b}$ and the task power $v_k^2 \sim k^{-a}$, we compared the true test loss dynamics (color) for a width-500 neural network model with the predicted power-law scalings $\beta = \frac{a-1}{b}$ from the fit exponents $a,b$. The predicted scalings from NTK regression accurately describe trained width-500 networks. On CIFAR-10, we compare the scalings of the CNN model and a standard MLP and find that the CNN obtains a better exponent due to its faster decaying tail sum $\sum_{n=k}^{\infty} \lambda_n v_n^2$. We stress that the exponents $\beta$ were estimated from our one-pass theory, but were utilized experiments on a finite training set. This approximate and convenient version of our theory is quite accurate across these varying models, in line with recent conjectures about early training dynamics \citep{nakkiran2021the}.

\begin{figure}[t]
    \centering
    \subfigure[MNIST NTK Spectra]{\includegraphics[width=0.32\linewidth]{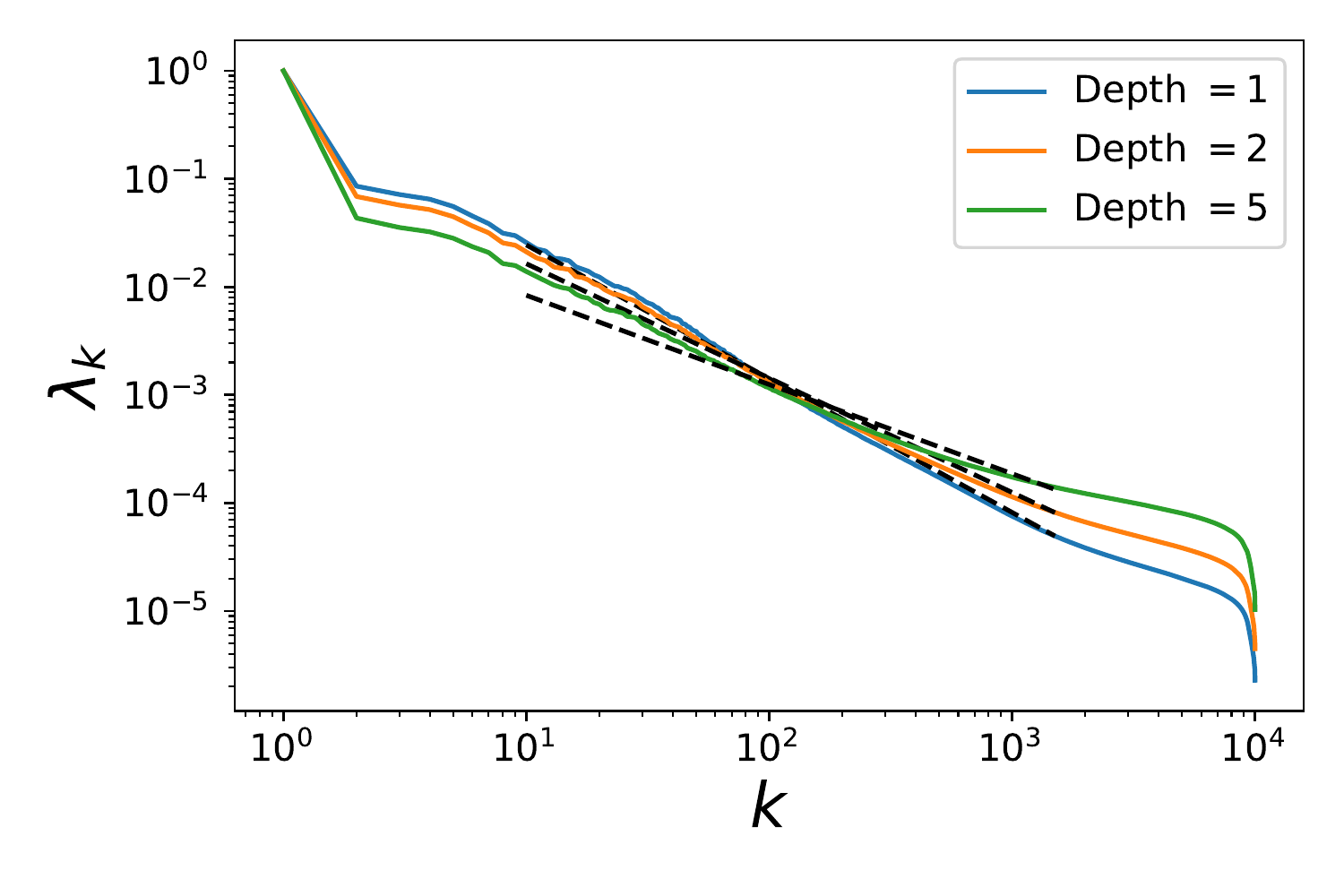}}
    \subfigure[MNIST Task Spectra]{\includegraphics[width=0.32\linewidth]{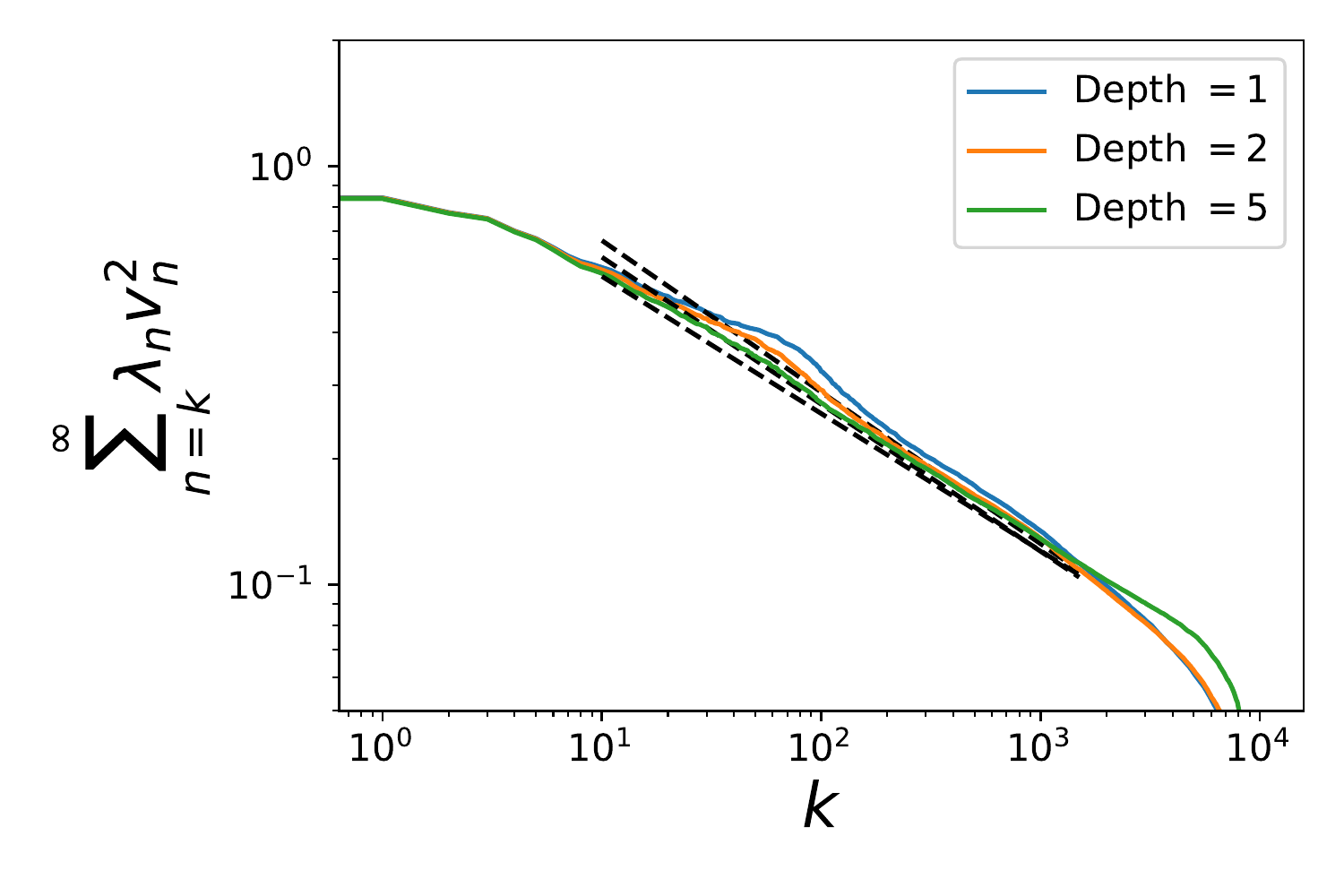}}
    \subfigure[Test Loss Scaling Laws]{\includegraphics[width=0.32\linewidth]{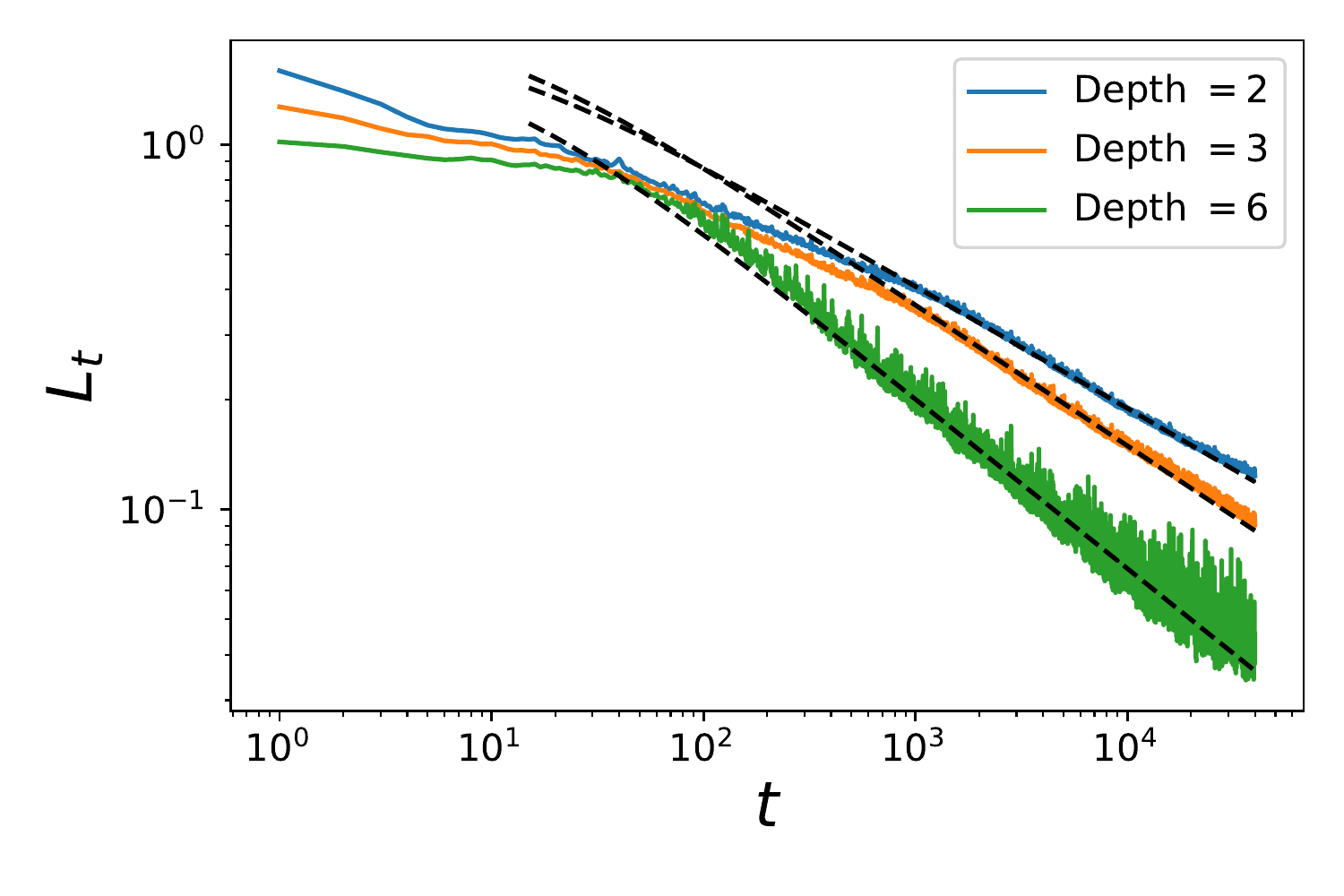}}
    \subfigure[CIFAR-10 NTK Spectra]{\includegraphics[width=0.32\linewidth]{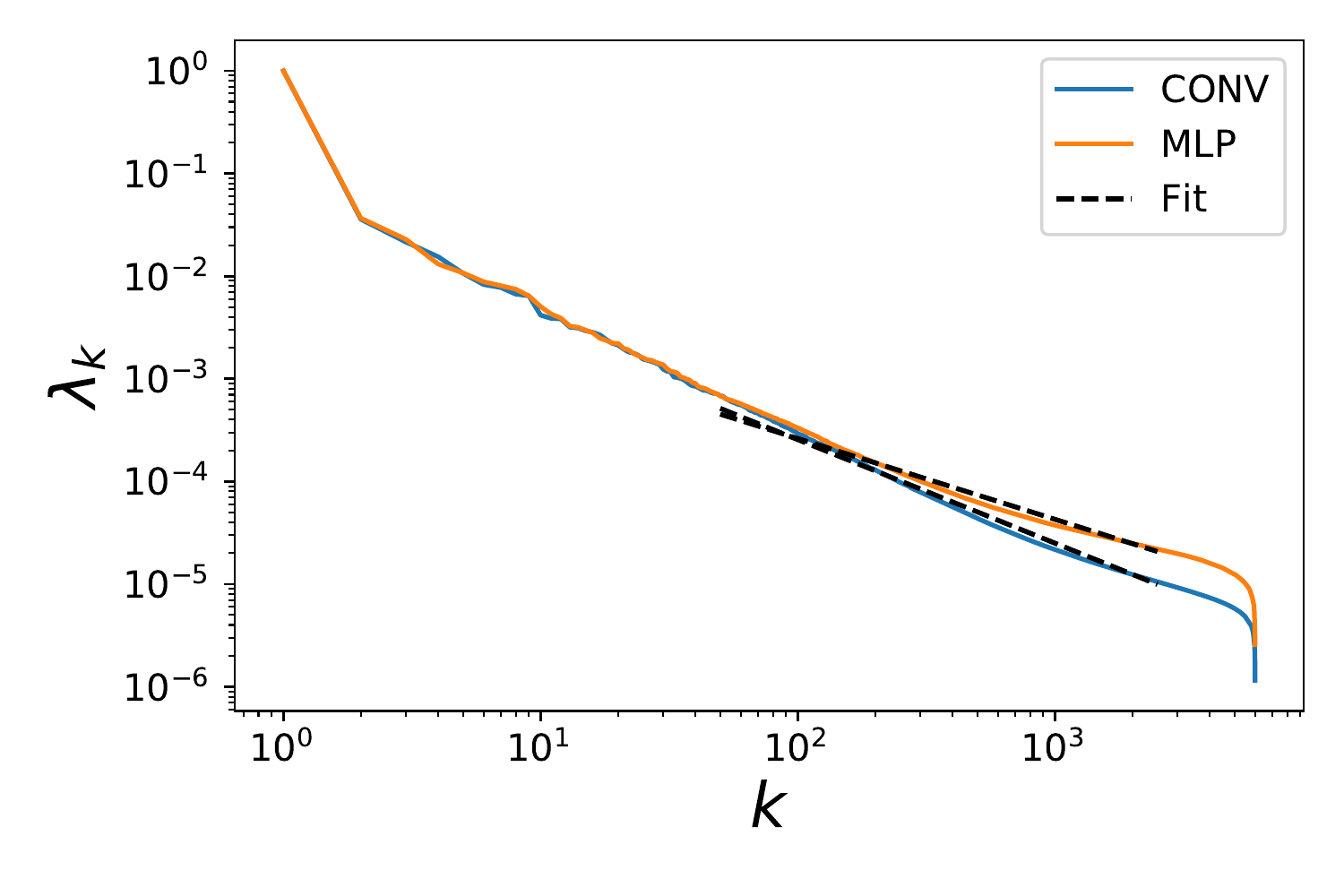}}
    \subfigure[CIFAR-10 Task Spectra]{\includegraphics[width=0.32\linewidth]{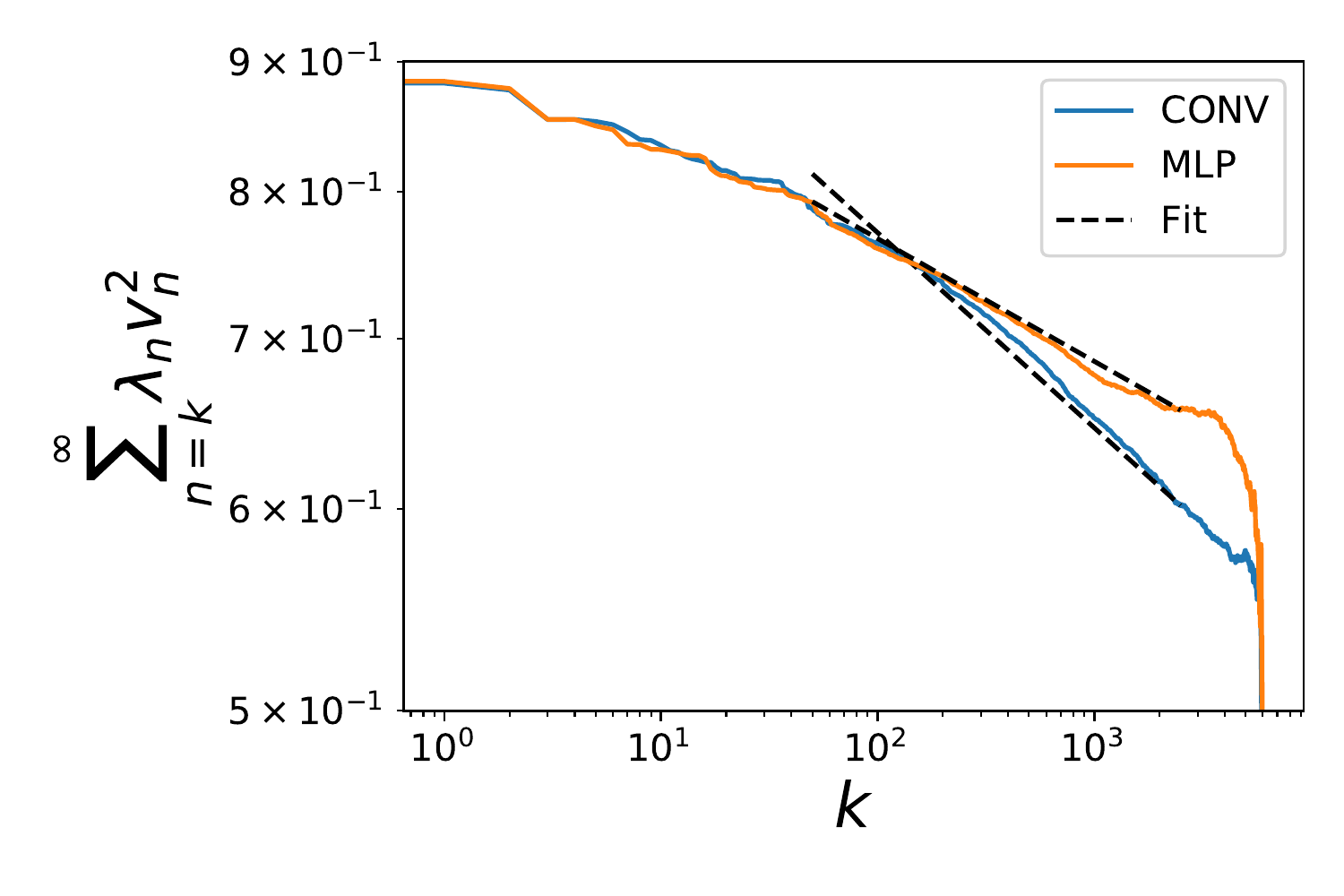}}
    \subfigure[Test Loss Scalings]{\includegraphics[width=0.32\linewidth]{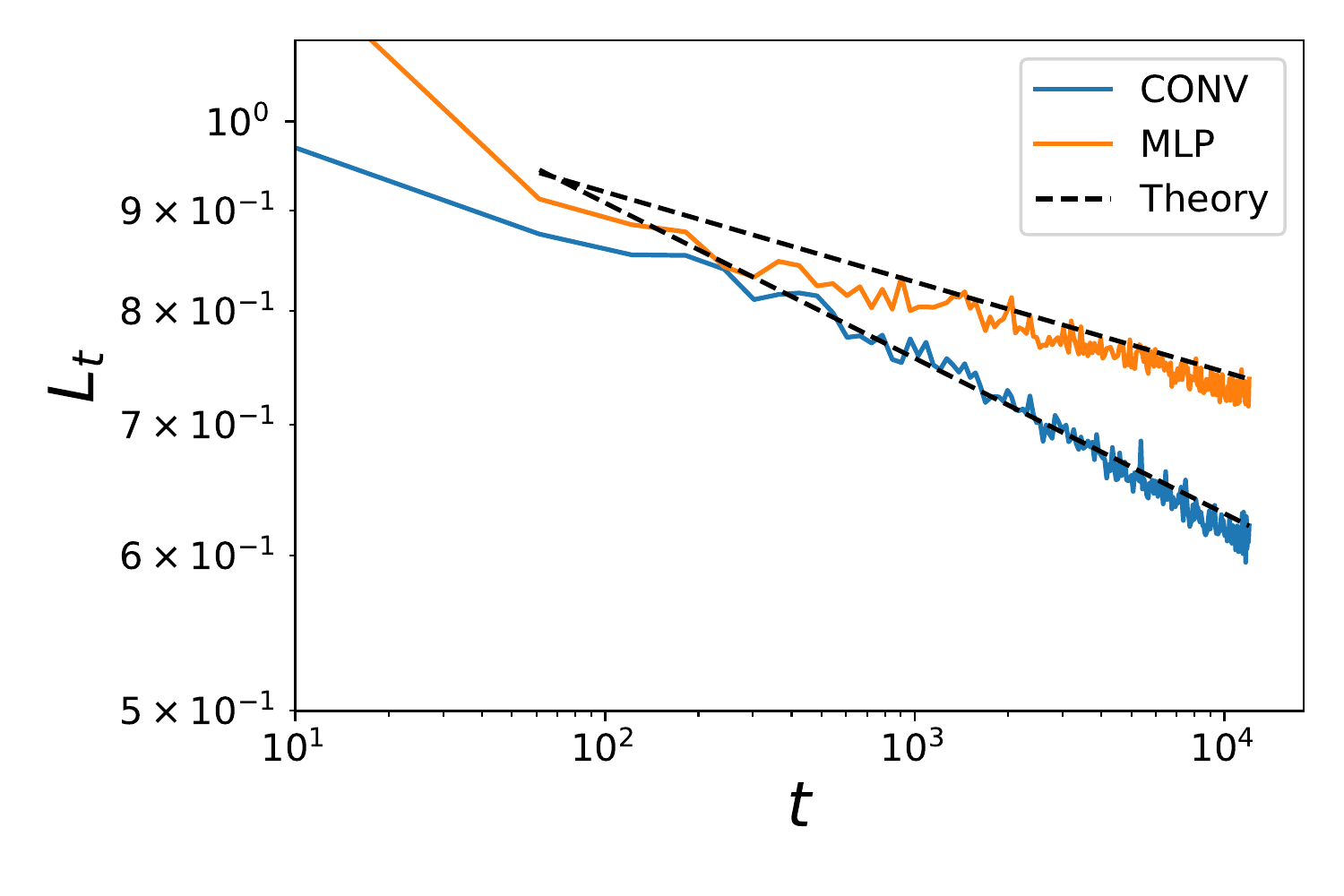}}
    \caption{ReLU neural networks of depth $D$ and width $500$ are trained with SGD on full MNIST. (a)-(b) Feature and spectra are estimated by diagonalizing the infinite width NTK matrix on the training data. We fit a simple power law to each of the curves $\lambda_k \sim k^{-b}$ and $v_k^2 \sim k^{-a}$. (c) Experimental test loss during SGD (color) compared to theoretical power-law scalings $t^{-\frac{a-1}{b}}$ (dashed black). Deeper networks train faster due to their slower decay in their feature eigenspectra $\lambda_k$, though they have similar task spectra. (d)-(f) The spectra and test loss for convolutional and fully connected networks on CIFAR-10. The CNN obtains a better convergence exponent due to its faster decaying task spectra. The predicted test loss scalings (dashed black) match experiments (color). }
    \label{fig:NTK_fig}
\end{figure}


\section{Conclusion}\label{discussion}

Studying a simple model of SGD, we were able to uncover how the feature geometry governs the dynamics of the test loss. We derived average learning curves $\left< L_t \right>$ for both Gaussian and general features and showed conditions under which the Gaussian approximation is accurate. The proposed model allowed us to explore the role of the data distribution and neural network architecture on the learning curves, and choice of hyperparameters on realistic learning problems. While our theory accurately describes networks in the lazy training regime, average case learning curves in the feature learning regime would be interesting future extension. Further extensions of this work could be used to calculate the expected loss throughout curriculum learning where the data distribution evolves over time as well as alternative optimization strategies such as SGD with momentum.

\section*{Reproducibility Statement}

The code to reproduce the experimental components of this paper can be found here \url{https://github.com/Pehlevan-Group/sgd_structured_features}, which contains jupyter notebook files which we ran in Google Colab. More details about the experiments can be found in Appendix \ref{expt_details}. Generally, detailed derivations of our theoretical results are provided in the Appendix. 

\section*{Acknowledgements}
We thank the Harvard Data Science Initiative and Harvard Dean’s Competitive Fund for Promising Scholarship for their support. We also thank Jacob Zavatone-Veth for useful discussions and comments on this manuscript.

\bibliographystyle{iclr2022_conference}
\bibliography{bib}

\pagebreak

\appendix

\counterwithin{figure}{section}

\section{Decomposition of the Features and Target Function}\label{target_decompose}
Let $y(\x)$ be a square integrable target function with $\left< y(\x)^2 \right> < \infty$. Define the following integral operator $T_K$ for kernel $K(\x,\x') = \bm\psi(\x) \cdot \bm\psi(\x')$:
\begin{align}
    T_K[\phi](\x') = \int p(\x) K(\x,\x') \phi(\x) d\x
\end{align}
We are interested in eigenfunctions of this operator, function $\phi_k$ for which $T_K[\phi_k] = \lambda_k \phi_k$. For kernels with finite trace $\int K(\x,\x) p(\x) d\x < \infty$, Mercer's theorem \citep{RasmussenWilliams} guarantees the existence of a set of orthonormal eigenfunctions. Since $\bm\psi(\x)$ spans an $N$ dimensional function space, only $N$ of the kernel eigenfunctions will have non-zero eigenvalue. Since the basis of kernel eigenfunctions (including the zero eigenvalue functions) is complete over the space of square integrable functions. After ordering the eigenvalues $\lambda_1 > \lambda_2 > ... > \lambda_N$ with $\lambda_{N+\ell} = 0$, we obtain the expansion 
\begin{align}
    y(\x) = \sum_k \left< y(\x) \phi_k(\x) \right>_{\x} \phi_k(\x) = \sum_{k\leq N} v_k \phi_k(\x)  + y_{\perp}(\x) \ , \ y_{\perp}(\x) = \sum_{k > N} \left< y(\x) \phi_k(\x) \right> \phi_k(\x)
\end{align}
Further, we can decompose the feature map in this basis $\bm\psi(\x) = \sum_{k=1}^N \sqrt{\lambda_k} \bm u_k \phi_k(\x)$. We recognize through these decompositions the coefficients $v_k$ can be computed uniquely as $v_k = \lambda_k^{-1/2}\bm u_k^\top \left< \bm\psi(\x) y(\x) \right>$. This provides a recipe for determining the necessary spectral quantities for our theory. We see that the feature map's decomposition above reveals that $\lambda_k$ are also the eigenvalues of the feature correlation matrix $\bm\Sigma$ since
\begin{align}
    \bm\Sigma = \left< \bm\psi(\x) \bm\psi(\x)^\top \right> = \sum_{k\ell} \sqrt{\lambda_k \lambda_{\ell}} \u_k \u_{\ell}^\top \left< \phi_k(\x) \phi_{\ell}(\x) \right> = \sum_{k} \lambda_k \u_k \u_k^\top. 
\end{align}

\subsection{Finite Sample Spaces}
When we discuss experiments on MNIST or CIFAR, we use this technology for an atomic data distribution $p(\x) = \frac{1}{M} \sum_{\mu=1}^M \delta(\x-\x')$. Plugging this into the integral operator gives 
\begin{align}
    T_K[\phi](\x) = \frac{1}{M} \sum_{\mu} K(\x,\x^\mu) \phi(\x^\mu)
\end{align}
We see that, restricting the domain to the set of points $\{ \x_1,...,\x_M \}$, this amounnts to solving a matrix eigenvalue problem $\frac{1}{M} \bm K \bm\phi_k = \lambda_k \bm\phi_k$ where $\bm K \in \mathbb{R}^{M \times M}$ is the kernel gram matrix with entries $K_{\mu\nu} = K(\x^\mu,\x^\nu)$ and $\bm\phi_k$ has entries $\phi_{k,\mu} = \phi_k(\x^\mu)$.

\section{Proof of Theorem \ref{th_gauss}}\label{gauss_derivation}

Let $\bm\Delta_t = \w_t - \w^*$ represent the difference between the current and optimal weights and define the correlation matrix for this difference
\begin{equation}
    \C_t = \left< \bm\Delta_t \bm\Delta_t^\top \right>_{\mathcal D_{t-1}}.
\end{equation}

Using stochastic gradient descent, $\w_{t+1} = \w_t - \eta \g_t$ with gradient vector $\g_t = \frac{1}{m} \sum_{i=1}^m \bm\psi_i \bm\psi_i^\top \bm\Delta_t$, the $\C_t$ matrix satisfies the recursion
\begin{align}
    \C_{t+1} = \left< ( \bm\Delta_t - \eta \g_t ) (\bm \Delta_t - \eta \g_t )^\top  \right>_{\mathcal D_t} = \C_t - \eta \left<\g_t \bm\Delta_t^\top \right> - \eta \left< \bm\Delta_t \g_t^\top  \right> + \eta^2 \left< \g_t \g_t^\top \right>.
\end{align}
First, note that since $\psi_i$ are all independently sampled at timestep $t$, we can break up the average into the fresh batch of $m$ samples and an average over $\mathcal{D}_{t-1}$
\begin{equation}
    \left< \g_t \bm\Delta_t \right>_{\mathcal D_t} = \frac{1}{m} \sum_{i=1}^m \left< \bm\psi_i \bm\psi_i^\top \right>_{\bm\psi_i} \left< \bm\Delta \bm\Delta_t^\top \right>_{\mathcal D_{t-1}} = \bSigma \C_t.
\end{equation}
The last term requires computation of fourth moments
\begin{align}
    \left< \g_t \g_t^\top \right> &= \frac{1}{m^2} \sum_{i,j} \left< \bm\psi_i \bm\psi_i^\top \left< \bm\Delta_t \bm\Delta_t^\top \right>_{\mathcal D_{t-1}} \bm\psi_j \bm\psi_j^\top  \right>_{\bm\psi_i,\bm\psi_j} 
    \\
    &= \frac{1}{m^2} \sum_{i,j} \left< \bm\psi_i \bm\psi_i^\top \C_t \bm\psi_j \bm\psi_j^\top  \right>_{\bm\psi_i,\bm\psi_j} .
\end{align}

First, consider the case where $i=j$. Letting $\bm \psi = \bm\psi_i$, we need to compute terms of the form
\begin{equation}
    \sum_{k,\ell} C_{k,\ell} \left< \psi_j \psi_k \psi_\ell \psi_n \right>.
\end{equation}
For Gaussian random vectors, we resort to Wick-Isserlis theorem for the fourth moment
\begin{equation}
    \left< \psi_j \psi_k \psi_l \psi_n \right> = \left< \psi_j \psi_k \right>\left< \psi_\ell \psi_n \right> +  \left< \psi_j \psi_\ell \right>\left< \psi_k \psi_n \right> +  \left< \psi_j \psi_n \right>\left< \psi_\ell \psi_k \right>
\end{equation}
giving
\begin{equation}
    \left< \g_t \g_t^\top \right> = \frac{m+1}{m} \bSigma \C_t \bSigma + \frac{1}{m} \bSigma \  \text{Tr}\left(  \bSigma \C_t \right).
\end{equation}
This correlation structure for $\g_t$ implies that its covariance has the form
\begin{equation}
    \left< \text{Cov}_{\bm\psi}(\g_t) \right>_{\mathcal D_t} = \frac{1}{m} \bSigma \C_t \bSigma + \frac{1}{m} \bSigma \text{Tr}(\bSigma \C_t).
\end{equation}
Using the formula for $\left< \g_t \g_t^\top \right>$, we arrive at the following recursion relation for $\C_t$
\begin{equation}
    \C_{t+1} = \C_t - \eta \C_t \bSigma - \eta\bSigma \C_t + \eta^2 \frac{m + 1}{m} \bSigma \C_t \bSigma + \frac{1}{m} \eta^2 \bSigma \ \text{Tr}\left( \bSigma \C_t \right).
\end{equation}

Since we are ultimately interested in the generalization error $\left< L_t \right> = \left< \bm\Delta_t^\top \bm\Sigma \bm\Delta_t \right>= \text{Tr} \bSigma \C_t = \sum_{k} \lambda_k \u_k^\top \C_t \u_k$, it suffices to track the evolution of $c_{t,k} = \u_k^\top \C_t \u_k$
\begin{equation}
    c_{t+1,k} = \left(1-2 \eta \lambda_k + \eta^2 \frac{m + 1}{m} \lambda_k^2 \right) c_{t,k} + \frac{1}{m}\eta^2 \lambda_k \sum_{j } \lambda_j c_{t,j}.
\end{equation}
Vectorizing this equation for $\mathbf{c}$ generates the following solution
\begin{equation}
    \mathbf{c}_t = \A^t \mathbf{c}_0 \ , \ \A = \I - 2\eta \ \text{diag}(\bm\lambda) + \frac{m+1}{m} \eta^2 \ \text{diag}(\bm\lambda^2) + \frac{\eta^2}{m} \bm\lambda \bm\lambda^\top.
\end{equation}
The coefficient $c_{0,k} = v_k^2 = \left( \u_k^\top \w^* \right)^2$. To get the generalization error, we merely compute $\left< L_t \right> =  \bm\lambda^\top \mathbf{a}_t = \bm\lambda^\top \A^t \mathbf{v}^2$ as desired.

\section{Proof of Stability Conditions}\label{stability_proof}

We will first establish the following lower bound on the loss, where without loss of generality we assumed the maximum correlation is $1$:
\begin{align}
    L_t \geq \bm \lambda^\top \mathbf{v}^2 \left[ (1-\eta)^2 + \frac{\eta^2}{m} |\bm\lambda|^2 \right]^t .
\end{align}
We will then use this lower bound to provide necessary conditions on the learning rate and batch size for stability of the loss evolution. First, note that the following inequality holds elementwise

\begin{align}
    \A \bm\lambda = \left[ (\bm I - \eta \ \text{diag}\left( \bm \lambda) \right)^2 + \frac{\eta^2}{m} \text{diag}(\bm\lambda^2) + \frac{\eta^2}{m} \bm\lambda \right] \bm\lambda \geq \left[ (1-\eta)^2 + \frac{\eta^2}{m} |\bm\lambda|^2 \right] \bm\lambda
\end{align}
Repeating this inequality $t$ times gives $\bm A^t \bm\lambda \geq \left[ (1-\eta)^2 + \frac{\eta^2}{m}|\bm\lambda|^2 \right]^t \bm\lambda$. Using the fact that $L_t = \bm\lambda^\top \A^t \mathbf{v}^2$ gives the desired inequality. Note that this inequality is very close to the true result for isotropic features $\bm\lambda=\bm 1$ which gives $L_t \propto \left[ (1-\eta)^2 + \frac{\eta^2}{m} (|\bm\lambda|^2+1) \right]^t$. For anisotropic features with small learning rate, this bound becomes less tight.
For the loss to converge to zero at large time, the quantity in brackets must necessarily be less than one. This implies the following necessary condition on the batchsize and learning rate
\begin{align}
    \eta < \frac{2 m}{m + |\bm\lambda|^2}  \iff m > m_{min} = \frac{\eta |\bm\lambda|^2}{2-\eta}
\end{align}
where $m_{min}$ is the minimal batch size for learning rate $\eta$ and feature covariance eigenvalues $\bm\lambda$.

\subsection{Heuristic Batch Size and Learning Rate}\label{app:heuristic_optimal_hyperparams}

We can derive heuristic optimal choices of the learning rate and batch size hyperparameters $\eta, m$ at a fixed compute budget which optimize the lower bound derived above. 
\subsubsection{Fixed Learning Rate}
We will first consider optimizing only the batch size at a fixed learning rate $\eta$ before discussing the optimal $m$ when $\eta$ is chosen optimally.  The loss at a fixed compute budget $C =tm$ is lower bounded by
\begin{align}
    L_{C/m} \geq \bm\lambda^\top \bm v^2 \left[ (1-\eta)^2 + \frac{\eta^2}{m} |\bm\lambda|^2   \right]^{C/m}
\end{align}
For the purposes of optimization, we introduce $x = 1/m$ and consider optimizing
\begin{align}
    f(x) = x \ln\left[ A + B x \right] \ , \ A = (1-\eta)^2 , B = \eta^2 |\bm\lambda|^2
\end{align}
The first order optimality condition $f'(x) = 0$ implies that $\ln(A+Bx) + \frac{Bx}{A+Bx} = 0$. Letting $z = A + Bx$, this is equivalent to $z \ln z + z - A = 0$. This equation has solutions for all valid $A \in (0,1)$ giving solutions $z \in(e^{-1}, 1)$. Letting $z(\eta)$, represent the solution to $z + z\ln z - (1-\eta)^2 = 0$, the optimal batchsize has the form
\begin{align}
 m^*(\eta) = \frac{B}{z(A) -A} = \frac{\eta^2 |\bm\lambda|^2}{z(\eta) - (1-\eta)^2}
\end{align}
We can gain intuition for this result by considering the limit of $\eta \to 0$ and $\eta \to 1$. First, in the $\eta \to 0$ limit, we find that $z \sim \frac{A+1}{2}$ so $m^* \sim \frac{2 \eta |\bm\lambda|^2}{2-\eta} = 2 m_{min}$, making contact with the stability bound derived in Appendix Section \ref{stability_proof}. Thus for small learning rates, this heuristic optimum suggests doubling the minimal stable batchsize for optimal convergence. At large learning rates, $\eta \sim 1$ with $A \sim 0$, we find $z \sim e^{-1}$ so $m^*(\eta) \sim e \eta^2 |\bm\lambda|^2$. Thus for small $\eta$, we expect $m^*$ to scale linearly with $\eta$ while for large $\eta$, we expect a scaling of the form $\eta^2$. In either case, the optimal batchsize scales with feature eigenvalues through the sum of the squares $|\bm\lambda|^2 = \sum_k \lambda_k^2$.

\subsubsection{Heuristic Optimal Learning Rate and Batch Size}

We will now examine what happens when one first optimizes loss bounmd with respect to the the learning rate at any value of $m$ and then subsequently optimizes over the batch size $m$. We can easily find the $\eta$ which minimizes the lower bound. 
\begin{align}
    \frac{\partial}{\partial \eta} \left[ (1-\eta)^2 + \frac{\eta^2 |\bm\lambda|^2}{m} \right] = 0 \implies \eta^* = \frac{m}{m+|\bm\lambda|^2}
\end{align}
Note that this heuristic optimal learning rate is very close to the true optimum in the isotropic data setting $\eta_{true} = \frac{m}{m+N+1} \approx \frac{m}{m+N}$. Plugging this back into the loss bound, we find that at fixed compute $C = tm$, the loss scales like
\begin{align}
   L_{C/m} \geq \bm\lambda^\top \bm v^2 \left[ \frac{|\bm\lambda|^2}{m + |\bm\lambda|^2} \right]^{C/m}
\end{align}
With the optimal choice of learning rate, the loss at fixed compute monotonically increases with batch size, giving an optimal batchsize of $m = 1$. This shows that if the learning rate is chosen optimally then small batch sizes give the best performance per unit of computation. This corresponds to the hyperparameter choices $(\eta, m) = \left( \frac{1}{1+|\bm\lambda|^2}, 1 \right)$.

\section{Increasing $m$ Reduces the Loss at fixed $t$}\label{minibatch_deriv}

We will show that for a fixed number of steps $t$, increasing the minibatch size $m$ can only decrease the expected error. To do this, we will simply show that the derivative of the expected loss with respect to $m$,
\begin{equation}
    \frac{\partial \left<L_t\right>}{\partial m} =  \bm\lambda^\top \frac{\partial \A^t}{\partial m} \mathbf{v}^2,
\end{equation}
is always non-positive. The derivative of the $t$-th power of $\A$ can be identified with the chain rule
\begin{equation}
    \frac{\partial \A^t}{\partial m} = \frac{\partial \A}{\partial m} \A^{t-1} + \A \frac{\partial \A}{\partial m} \A^{t-2} + \A^2 \frac{\partial \A}{\partial m} \A^{t-3} + ... + \A^{t-1}  \frac{\partial \A}{\partial m}.
\end{equation}
Note that the matrix
\begin{equation}
    \frac{\partial \A}{\partial m} = - \frac{\eta^2}{m^2}\left[ \text{diag}(\bm\lambda^2) + \bm\lambda\bm\lambda^\top \right]
\end{equation}
has all non-positive entries. Thus we find that
\begin{equation}
    \frac{\partial \left< L_t \right>}{\partial m} = \sum_{n=0}^{t-1} \bm\lambda^\top \A^n  \frac{\partial \A}{\partial m} \A^{t-n-1} \mathbf{v}^2.
\end{equation}
Note that since all entries in $\mathbf{v}_k^2$ and $\A^{t-n-1}$ are non-negative, the vector $\bm{z}_n = \A^{t-n-1} \mathbf{v}^2$ has non-negative entries. By the same argument, the vector $\bm{q}_n = \A^{n} \bm\lambda$ is also non-negative in each entry. Therefore, each of the terms in $\frac{\partial \left< L_t \right>}{\partial m}$ above must be non-positive 
\begin{equation}
    \frac{\partial \left< L_t \right>}{\partial m} = \sum_{n=0}^{t-1} \bm z_n^\top \frac{\partial \A}{\partial m} \bm q_n = - \frac{\eta^2}{m} \sum_{n} \sum_{k,\ell} z_{n,k} \left[ \delta_{k,\ell} \lambda_k^2 + \lambda_k \lambda_\ell  \right] q_{n,\ell} \leq 0.
\end{equation}
Thus we find $\frac{\partial \left< L_t \right>}{\partial m} \leq 0$, implying that optimal $\left< L_t\right>$ is always obtained (possibly non-uniquely) at $m \to \infty$.

\section{Increasing $m$ Increases the Loss at Fixed $C = tm$ on Isotropic Features}\label{fixed_compute_minibatch}

Unlike the previous section, which considered fixed $t$ and varying $m$, in this section we consider fixing the total number of samples (or gradient evaluations) which we call the compute budget $C=tm$. For a fixed compute budget $C = tm$, and unstructured $N$ dimensional Gaussian features and optimal learning rate $\eta^* = \frac{m}{m+N+1}$, we have
\begin{equation}
   \left<  L_{C/m} \right> = \left( \frac{N+1}{m+N+1} \right)^{C/m} ||\w^*||^2.
\end{equation}
Taking a derivative with respect to the batch size we get
\begin{align}
   \frac{\partial}{\partial m} \log  \left< L_{C/m} \right> &= \frac{\partial }{\partial m} \frac{C}{m} \log\left( \frac{N+1}{m+N+1} \right) \nonumber
   \\
   &= \frac{C}{m^2} \log\left( \frac{m+N+1}{N+1} \right) +  \frac{C}{m (m+N+1)} > 0.
\end{align} 
This exercise demonstrates that, for the isotropic features, smaller batch-sizes are preferred at a fixed compute budget $C$. 
This result does not hold for arbitrary spectra $\lambda_k$. In the general case, optimal minibatch sizes can exist as we show in Figure \ref{fig:isotropic} (e)-(f) for power law and MNIST spectra.

\section{Proof of Theorem \ref{arbitary_features}}\label{non_gauss}

Let $\text{Vec}(\cdot)$ denote a flattening of an $N\times N$ matrix into a vector of length $N^2$ and let $\text{Mat}(\cdot)$ represent a flattening of a 4D tensor into a $N^2 \times N^2$ two-dimensional matrix. We will show that the expected loss (over $\mathcal D_t$) is 
\begin{align}
    \left< L_t \right> &= \sum_{k} \lambda_k c_{t,kk}  \ , \ \mathbf{c}_{t}=\left( \G + \frac{\eta^2}{m} \text{Mat}(\bm\kappa) \right)^t \text{Vec}(\mathbf{v}\mathbf{v}^\top) \in \mathbb{R}^{N^2} 
\end{align}
where $\left[ \mathbf{G} \right]_{ij, kl} = \delta_{ik}\delta_{jl} \left(1 - \eta(\lambda_i + \lambda_j) + \frac{\eta^2(m-1)}{m} \lambda_i \lambda_j \right)$ and $[\mathbf{v}]_k = \u_k \cdot \w^*. $

We rotate all of the feature vectors into the eigenbasis of the covariance, generating diagonalized features $\phi_k = \u_k^\top \bm\psi$ and introduce the following fourth moment tensor
\begin{equation}
    \kappa_{ijkl} = \left< \phi_i \phi_j \phi_k \phi_\ell \right>.
\end{equation}
We redefine $\C_t$ in the appropriate (rotated) basis by projecting onto the eigenvectors of the covariance
\begin{equation}
    \C_t = \mathbf{U}^\top \left< \bm\Delta_t \bm\Delta_t^\top \right> \mathbf{U},
\end{equation}
where $\mathbf{U} = [\u_1,\u_2,...,\u_N]$. With this definition, $\C$'s dynamics take the form
\begin{equation}
    \C_{t+1} = \C_t - \mathbf{\Lambda} \C_t - \C_t \mathbf{\Lambda} + \frac{\eta^2 (m-1)}{m} \mathbf{\Lambda} \C_t \mathbf{\Lambda} + \left< \bm\phi \bm\phi^\top \C_t \bm\phi \bm\phi^\top \right>.
\end{equation}
The elements of the matrix can be expressed with the fourth moment tensor
\begin{equation}
    \sum_{k\ell} \left< \phi_i \phi_{j} \phi_k \phi_{\ell} \right> C_{k\ell} = \sum_{k\ell} \kappa_{ijkl} C_{k,\ell}.
\end{equation}
We thus generate the following dynamics for $C_{ij}^t$ 
\begin{equation}
    C_{ij}^{t+1} = \left (1-\eta(\lambda_i + \lambda_j) +  \frac{\eta^2(m-1)}{m} \lambda_i \lambda_j \right) C_{ij}^t +\frac{\eta^2}{m} \sum_{kl} \kappa_{ijkl} C_{kl}^t.
\end{equation}
Let $\mathbf{c}_t = \text{Vec}(\C_t)$, then we have
\begin{equation}
    \mathbf{c}_{t+1} = \left( \G_0 + \frac{\eta^2}{m } \text{Mat}(\bm\kappa) \right) \mathbf{c}_t \ , \ [\G_0]_{ij,k\ell} = \delta_{ik}\delta_{j\ell} \left[ 1 - \eta (\lambda_i+\lambda_j) + \frac{\eta^2(m-1)}{m} \lambda_i\lambda_j \right].
\end{equation}
Solving these dynamics for $\mathbf{c}$, recognizing that $\mathbf{c}_0 = \text{Vec}(\mathbf{v}\mathbf{v}^\top )$, and computing $\left< L_t \right> = \text{Tr} \bSigma \C_t = \sum_{k} C_{kk} \lambda_k$ gives the desired result.

\subsection{Proof of Theorem \ref{th_regularity}}\label{app_proof_regularity}

Suppose that the features satisfy the regularity condition 
\begin{align}
    \left< \bm \psi \bm\psi^\top \bm G \bm\psi \bm\psi^\top \right> \preceq (\alpha+1)\bm\Sigma \bm G \bm\Sigma + \alpha \bm\Sigma \text{Tr}\left(\bm\Sigma \bm G \right)
\end{align}
Recalling the recursion relation for $\bm C_t$
\begin{align}
    \C_{t+1} &= \C_t - \eta \bm\Sigma\C_t - \eta \C_t \bm\Sigma + \eta^2 \frac{m^2-m}{m^2} \bm\Sigma \C_t \bm\Sigma + \frac{\eta^2}{m} \left< \bm\psi \bm\psi^\top \C_t \bm\psi \bm\psi^\top \right>  \nonumber  
    \\
    &\preceq \bm C_t - \eta \bm\Sigma\C_t - \eta \C_t \bm\Sigma + \eta^2 \frac{m^2-m}{m^2} \bm\Sigma \C_t \bm\Sigma + \frac{\eta^2}{m} \left[ (\alpha+1) \bm\Sigma \C_t \bm\Sigma + \alpha \bm\Sigma \text{Tr}\C_t \bm\Sigma \right]  
    \\
    &= (\I - \eta\bm\Sigma) \C_t (\I - \eta \bm\Sigma) + \frac{\alpha \eta^2}{m} \left[ \bm\Sigma \C_t \bm\Sigma + \bm\Sigma \text{Tr}\bm\Sigma\C_t \right]
\end{align}
Defining that $c_{k,t} = \bm u_k^\top \C_t \bm u_k$, we note $\bm c_{t+1} \leq \left( (\bm I - \eta \ \text{diag}(\bm\lambda))^2 + \frac{\eta^2 \alpha}{m}\left[ \text{diag}(\bm\lambda^2) + \bm\lambda\bm\lambda^\top \right] \right) \bm c_t$. Using the fact that $L_t = \bm c_t^\top \bm\lambda$, we find
\begin{align}
    L_t \leq \bm\lambda^\top \left( (\bm I - \eta \ \text{diag}(\bm\lambda))^2 + \frac{\eta^2 \alpha}{m}\left[ \text{diag}(\bm\lambda^2) + \bm\lambda\bm\lambda^\top \right] \right)^t \bm v^2
\end{align}
which proves the desired bound.

\begin{figure}[ht]
    \centering
    \subfigure[Non-Gaussian Effects on MNIST]{\includegraphics[width=0.495\linewidth]{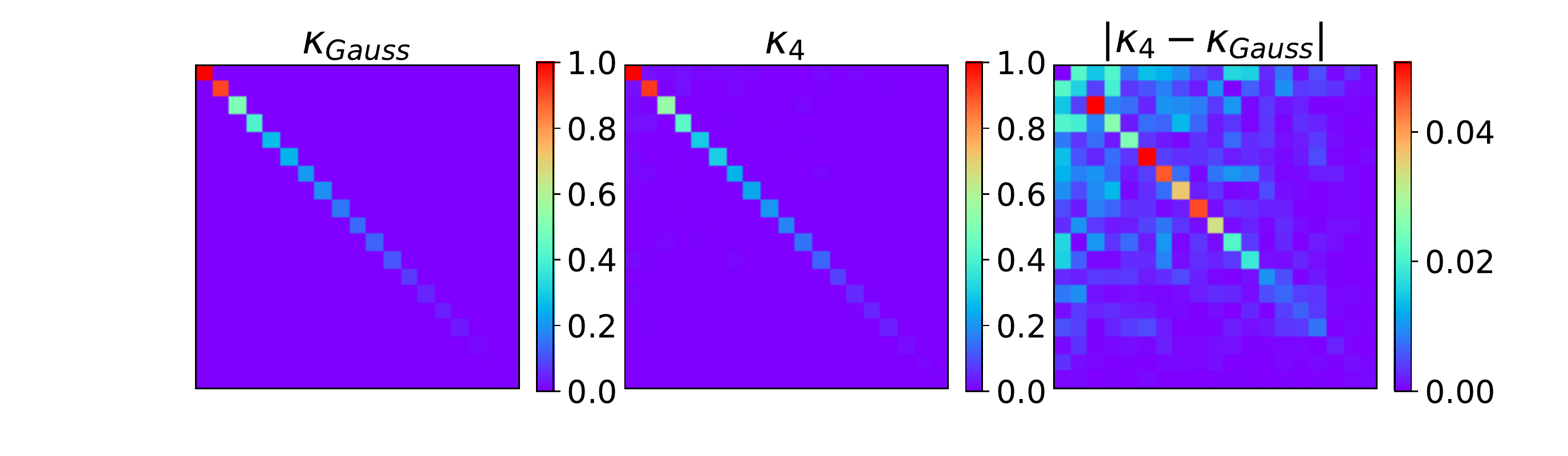}}
     \subfigure[Non-Gaussian Effects on CIFAR ]{\includegraphics[width=0.495\linewidth]{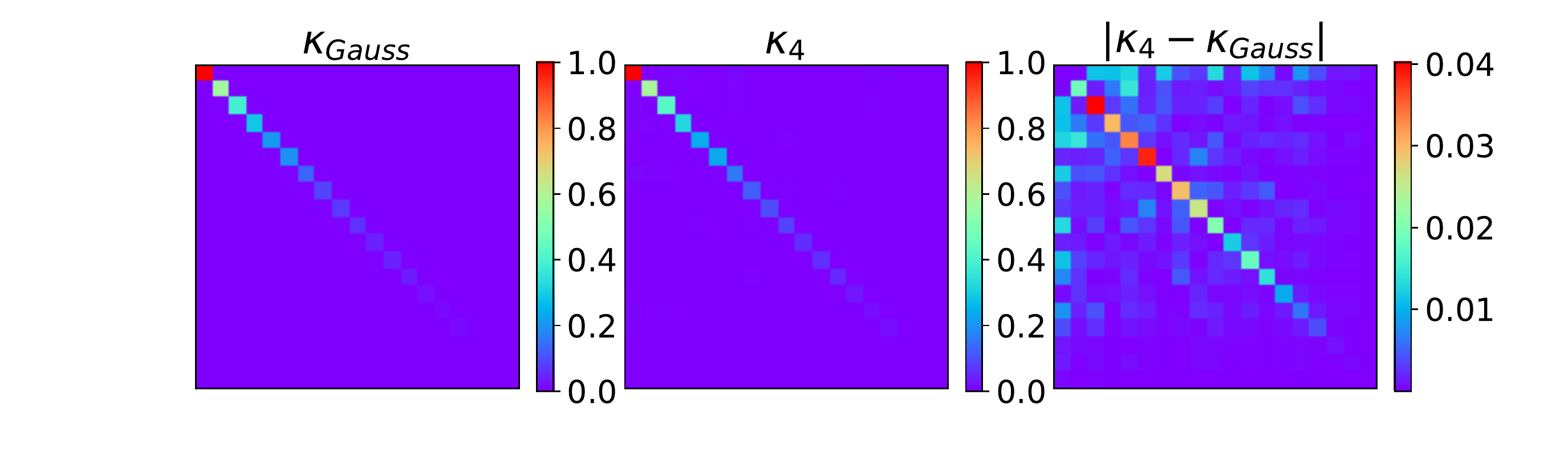}}
    \caption{Non-Gaussian effects are small on random feature models. (a)-(b) The first 20-dimensions of the summed fourth moment matrix $\kappa^4_{ij} = \u_i^\top \left< \bm\psi \bm\psi^\top \bm\psi \bm\psi^\top  \right> \u_j$ are plotted for the Gaussian approximation and the empirical fourth moment. Differences between the Gaussian approximation and true fourth moment matrices on this example are visible, but are only on the order of $\sim 5\%$ of the size of the entries in $\kappa_4$. }
    \label{fig:nonlin_batch_noise}
\end{figure}


\section{Power Law Scalings in Small Learning Rate Limit}\label{Power_law_appendix}

By either taking a small learning rate $\eta$ or a large batch size, the test loss dynamics reduce to the test loss obtained from gradient descent on the population loss. In this section, we consider the small learning rate limit $\eta \to 0$, where the average test loss follows 
\begin{equation}
    \left< L_t \right> \sim \sum_{k=1}^\infty \lambda_k v_k^2 (1-\eta\lambda_k)^{2t}.
\end{equation}
Under the assumption that the eigenvalue and target function power spectra both follow power laws $\lambda_k \sim k^{-b}$ and $v_k^2 \lambda_k \sim k^{-a}$, the loss can be approximated by an integral over all modes $k$
\begin{align}
    \left< L_t \right> = \sum_{k} k^{-a} (1-\eta k^{-b})^{2t} \sim \int_1^\infty \exp\left(  2\eta \ln(1- \eta k^{-b}) t - a \ln k \right) dk 
    \\
    \sim \int_1^\infty \exp\left( - 2\eta  \eta k^{-b} t - a \ln k \right) dk  \ , \ \eta \to 0
\end{align}
We identify the function $f(k) = 2\eta k^{-b} + \frac{1}{t} \ln k$ and proceed with Laplace's method \citep{bender}. This consists of Taylor expanding $f(k)$ around its minimum to second order and computing a Gaussian integral
\begin{equation}
    \int \exp(- t f(k) ) dk \sim \int \exp\left(-t f(k^*) - \frac{t}{2} f''(k^*) (k-k^*)^2 \right) \sim \exp(-t f(k^*) ) \sqrt{\frac{2\pi}{t f''(k^*)}}.
\end{equation}
We must identify the $k^*$ which minimizes $f(k)$. The interpretation of this value is that it indexes the mode which dominates the error at a large time $t$. The first order condition gives 
\begin{equation}
    f'(k) = -2b\eta k^{-b-1} + \frac{a}{t k} = 0 \implies k^* = \left(2 b \eta t / a  \right)^{1/b}. 
\end{equation}
The second derivative has the form
\begin{equation}
    f''(k^*) = 2 b^2 \eta k^{-b-2} - \frac{1}{tk^2} |_{k^*} = 2 b^2 \eta \left( 2 b \eta t / a \right)^{-(b+2)/b} - \frac{1}{t} \left( 2b \eta t / a \right)^{-2/b} \sim t^{-1 - 2/b}.
\end{equation}
Thus we are left with a scaling of the form
\begin{equation}
    \left< L_t  \right> \sim \exp(-a/b\ln t) t^{1/b} \sim t^{-\frac{a-1}{b}}.
\end{equation}

\section{Proof of Theorem \ref{th_unlearnable}}\label{app_proof_unlearnable}

Let $\left< y_\perp^2 \right> = \sigma^2$ and $\left< y_\perp \right> = 0$, $\left< y_{\perp} \bm\psi \right> = \bm 0$. The gradient descent updates take the following form $\bm\Delta_{t+1} = \bm\Delta_t - \eta \g_t$ with
\begin{equation}
    \g_t = \frac{1}{m} \sum_{i=1}^m \bm\psi_i \left[ \bm\psi_i^\top \bm\Delta_t + y_{\perp,i} \right].
\end{equation}
Again, defining $\C_t = \left< \bm\Delta_t \bm\Delta_t^\top \right>$ we perform the average over each of the $\bm\psi_i$ vectors to obtain the following recursion relation
\begin{align}
    \C_{t+1} &= \left< \bm\Delta_t \bm\Delta_t^\top \right> - \eta \left< \bm\Delta_t \g_t^\top \right> -\eta \left<  \g_t  \bm\Delta_t^\top \right> + \eta^2 \left< \g_t \g_t^\top \right> \nonumber
    \\
    &=\C_t - \eta \bm\Sigma \C_t - \eta \C_t \bm\Sigma + \frac{m+1}{m} \bm\Sigma \C_t \bSigma + \frac{1}{m} \bSigma \text{Tr}\left( \C_t \bSigma \right) + \eta^2 \sigma^2 \bSigma.
\end{align}
Again, analyzing $c_{t,k} = \u_k^\top \C_t \u_k$ we find
\begin{equation}
    c_{t+1,k} = \left(1-2\eta\lambda_k + \eta^2 \frac{m+1}{m} \lambda_k^2 \right) c_{t,k} + \frac{1}{m} \sum_\ell \lambda_{\ell} c_{t,\ell} + \eta^2 \sigma^2 \lambda_k.
\end{equation}
The vector $\bm c_t$ follows the linear evolution
\begin{equation}
    \bm c_{t+1} = \A \bm c_t + \eta^2 \sigma^2 \bm\lambda.
\end{equation}

Let $\bm b = \eta^2 \sigma^2 \bm\lambda$. Writing out the first few steps, we identify a pattern
\begin{align}
    \bm c_{1} &= \A \bm c_0 + \bm b \nonumber
    \\
    \bm c_{2} &= \A \bm c_1 + \bm b = \A^2 \bm c_0 + \A \bm b + \bm b \nonumber
    \\
    \bm c_{3} &= \A \bm c_2 + \bm b = \A^3 \bm c_0 + \A^2 \bm b + \A \bm b + \bm b \nonumber
    \\
    ... \nonumber
    \\
    \bm c_{t} &= \A^t \bm c_0 + \left( \sum_{n=0}^{t-1} \A^n \right) \bm b. 
\end{align}
The geometric sum $\left( \sum_{n=0}^{t-1} \A^n \right)$ can be computed exactly under the assumption that $(\I - \A)$ is invertible which holds provided all of $\A$'s eigenvalues are less than unity, which necessarily holds provided the system is stable. The geometric sum yields
\begin{equation}
    \left( \sum_{n=0}^{t-1} \A^n \right) = (\I - \A)^{-1} \left( \I - \A^t \right).
\end{equation}

Recalling the definition of $\bm b = \sigma^2 \eta^2 \bm \lambda$ and the definition of the average loss $\left< L_t \right> = \bm\lambda^\top \bm c_t$, we have
\begin{equation}
    \left< L_t \right> = \sigma^2 +  \bm\lambda^\top \A^t \bm c_0 + \eta^2\sigma^2 \bm\lambda^\top (\I - \A)^{-1} \left( \I - \A^t \right) \bm\lambda .
\end{equation}
Recognizing $\bm c_0 = \mathbf{v}^2$ gives the desired result.

\section{Proof of theorem \ref{thm_test_train_split}}\label{app:test_train_split}

We will now prove that if the training $\hat{p}(\x)$ and test distributions $p(\x)$ are different and have feature correlation matrices $\bm{\hat\Sigma}$ and $\bm\Sigma$ respectively, then the average training and test losses have the form
\begin{align}
    L_{\text{train}} = \text{Tr}\left[ \bm{\hat \Sigma} \C_t \right] \ L_{\text{test}} = \text{Tr}\left[ \bm{ \Sigma} \C_t \right] .
\end{align}
As before, we will assume that there exist weights $\w^*$ which satisfy $y = \w^* \cdot \bm\psi$. We start by noticing that the update rule for gradient descent 
\begin{align}
    \w_t = \w_t - \eta \g_t \ , \ \g_t = \frac{1}{m} \sum_{\mu=1}^m \bm \psi_{t,\mu} \bm \psi_{t,\mu}^\top \left[ \w_t - \w^* \right]
\end{align}
generates the following dynamics for the weight discrepancy correlation $\C_t= \left<(\w_t-\w^*) (\w_t -\w^*)^\top \right>_{\mathcal D_t}$.
\begin{align}
    \C_{t+1} &= \C_t - \eta\bm{\hat \Sigma} \C_t - \eta \C_t \bm{\hat \Sigma} + \eta^2 \left(1-\frac{1}{m} \right) \bm{\hat\Sigma} \C_t \bm{\hat\Sigma} + \frac{\eta^2}{m} \left< \bm\psi \bm\psi^\top \C_t \bm\psi \bm\psi^\top \right>
\end{align}
This formula can be obtained through the simple averaging procedure shown in \ref{gauss_derivation}. Under the Gaussian approximation, we can obtain a simplification
\begin{align}
    \C_{t+1} &= (\I - \eta \bm\Sigma)\C_t(\I-\eta \bm\Sigma) + \frac{\eta^2}{m}\left[ \bm\Sigma\C_t \bm\Sigma + \bm\Sigma\text{Tr}\bm\Sigma \C_t \right]
\end{align}
We can solve for the evolution of the diagonal and off-diagonal entries in this matrix giving

\begin{align}
    \u_k^\top \C_{t} \u_k &= \left[ \A^t \bm v^2 \right]_k \ , \ \u_k^\top \C_{t} \u_\ell = \left(1-\eta\hat{\lambda}_k-\eta \hat{\lambda}_\ell + \eta^2 \left(1 + \frac{1}{m} \right)\hat{\lambda}_k \hat{\lambda}_\ell \right)^t v_k v_\ell
\end{align}

To calculate the training and test error, we have
\begin{align}
    \left< L_{\text{test}} \right>= \left< ( \bm\psi(\x) \cdot \w_t - \bm\psi(\x) \cdot \bm w^* )^2 \right>_{\x \sim p(\x), \mathcal D_t } = \left< (\w_t- \w^*) \bm\Sigma \left( \w_t - \w^* \right) \right> = \text{Tr} \bm\Sigma \C_t. \nonumber
    \\
    \left< L_{\text{train}} \right> = \left< ( \bm\psi(\x) \cdot \w_t - \bm\psi(\x) \cdot \bm w^* )^2 \right>_{\x \sim \hat{p}(\x), \mathcal D_t } = \left< (\w_t- \w^*) \bm{\hat\Sigma} \left( \w_t - \w^* \right) \right> = \text{Tr} \bm{\hat \Sigma} \C_t.
\end{align}

Note that in the training error formula, since $\bm{\hat \Sigma}$ has eigenvectors $\u_k$ only the diagonal terms $\u_k^\top \C_t \u_k$ enter into the formula for $L_{train}$, but off-diagonal components $\u_k^\top \C_t \u_{\ell}$ do enter into the formula for $L_{test}$

\section{Experimental Details}\label{expt_details}

For Figures \ref{fig:feature_spectra_learning_curve}, we use the last two classes of MNIST and CIFAR-10. We encode the target values as binary $y\in \{+1,-1\}$. For Figure \ref{fig:NTK_fig}, we use $6000$ random training points drawn from entire MNIST and CIFAR-10 datasets to calculate the spectrum of the Fisher information matrix. We train with SGD on these training data, using one-hot label vectors for each training example and plot the error on the test set. We train our models on a Google Colab GPU and include code to reproduce all experimental results in the supplementary materials. To match our theory, we use fixed learning rate SGD. Both evaluation of the infinite width kernels and training were performed with the Neural Tangents API \citep{neuraltangents2020}.

\end{document}